%
%
%

\documentclass[graybox]{./styles/svmult}


\usepackage{mathptmx}       
\usepackage{helvet}         
\usepackage{courier}        
\usepackage{type1cm}        
%
\usepackage{makeidx}         
\usepackage{graphicx}        
\usepackage{multicol}        
\usepackage[bottom]{footmisc}

\usepackage{times}
\usepackage{amsmath}
\usepackage{amssymb}
\usepackage{algorithmic}
\usepackage{algorithm}
\usepackage{subfigure}
\usepackage{multirow}


\makeindex             



\usepackage[pagebackref=true,breaklinks=true,letterpaper=true,colorlinks,bookmarks=false]{hyperref}

\begin{document}

\newcommand{\mytextbf}[1]{\vspace{.1cm}\noindent\textbf{{#1}}~}

\title*{Fast Approximate $K$-Means via Cluster Closures}

\author{
Jingdong Wang
and
Jing Wang
and
Qifa Ke
and
Gang Zeng
and
Shipeng Li}

\institute{Jingdong Wang \at Microsoft \email{jingdw@microsoft.com}
\and
Jing Wang,
\at Peking University
\email{cis.wangjing@pku.edu.cn}
\and
Qifa Ke
\at Microsoft \email{qke@microsoft.com}
\and
Gang Zeng, \at Peking University
\email{g.zeng@ieee.org}
\and
Shipeng Li
\at Microsoft \email{spli@microsoft.com}
}

\maketitle

\abstract{$K$-means, a simple and effective clustering algorithm,
is one of the most widely used algorithms in multimedia and computer vision community.
Traditional $k$-means is an iterative algorithm---in each iteration
new cluster centers are computed
and each data point is re-assigned to its nearest center. The cluster re-assignment step becomes prohibitively expensive when the number of data points and cluster centers are large.\\
\indent In this paper, we propose a novel approximate $k$-means algorithm
to greatly reduce the computational complexity in the assignment step.
Our approach is motivated by the observation that most active points
changing their cluster assignments at each iteration are located on or near cluster boundaries.
The idea is to efficiently identify those active points by pre-assembling the data into groups of neighboring points using multiple random spatial partition trees,
and to use the neighborhood information to construct a closure for each cluster,
in such a way only a small number of cluster candidates need to be considered
when assigning a data point to its nearest cluster. Using complexity analysis,
image data clustering,
and applications to image retrieval,
we show that our approach out-performs state-of-the-art approximate $k$-means algorithms
in terms of clustering quality and efficiency.
}

\section{Introduction}

$K$-means~\cite{MacQueen67} has been widely used
in multimedia,
computer vision and machine learning for clustering and vector quantization.
In large-scale image retrieval,
it is advantageous to learn a large codebook containing one million or more entries~\cite{NisterS06,PhilbinCISZ07,SivicZ03},
which requires clustering tens or
even hundreds of millions of high-dimensional feature descriptors
into one million or more clusters.
Another emerging application of large-scale clustering is
to organize a large corpus of web images for various purposes such as web image browsing/exploring~\cite{WangJH11}.

The standard $k$-means algorithm,
Lloyd's algorithm~\cite{Forgy65, Lloyd82, MacQueen67},
is an iterative refinement approach that greedily
minimizes the sum of squared distances between each point and its assigned cluster center.
It consists of two iterative steps,
the assignment step and the update step.
The assignment step aims to find the nearest cluster
for each point by checking the distance
between the point and each cluster center;
The update step re-computes the cluster centers
based on current assignments.
When clustering $n$ points into $k$ clusters,
the assignment step costs $O(nk)$.
For applications with large $nk$,
the assignment step in exact $k$-means becomes prohibitively expensive.
Therefore many approximate solutions,
such as hierarchial $k$-means (HKM)~\cite{NisterS06}
and approximate $k$-means (AKM)~\cite{PhilbinCISZ07},
have been developed.

In this paper,
we introduce a novel and effective approximate $k$-means algorithm~\footnote{A conference version appeared in~\cite{WangWKZL12}.}.
Our approach is motivated by
the observation
that~\emph{active points},
defined as the points whose cluster assignments change in each iteration,
often locate at or near boundaries of different clusters.
The idea is to identify those
active points at or near cluster boundaries to improve both the efficiency and accuracy
in the assignment step of the $k$-means algorithm.
We generate a neighborhood set for each data point by pre-assembling the data points using multiple random partition trees~\cite{VermaKD09}.
A \emph{cluster closure} is then formed by expanding each point in the cluster into its neighborhood set,
as illustrated in Figure~\ref{fig:clusterneighborhoods}. When assigning a point $\mathbf{x}$ to its nearest cluster,
we only need to consider those clusters that contain $\mathbf{x}$ in their closures.
Typically a point belongs to a small number of cluster closures,
thus the number of candidate clusters are greatly reduced in the assignment step.

We evaluate our algorithm by complexity analysis,
the performance on clustering real data sets, and the performance of image retrieval applications with codebooks learned by clustering.
Our proposed algorithm achieves significant improvements compared to the state-of-the-art,
in both accuracy and running time.
When clustering a real data set of $1M$ $384$-dimensional GIST features into $10K$ clusters,
our algorithm converges more than $2.5$ faster than the state-of-the-art algorithms.
In the image retrieval application on a standard dataset,
our algorithm learns a codebook with $750K$ visual words
that outperforms the codebooks with $1M$ visual words learned
by other state-of-the-art algorithms --
even our codebook with $500K$ visual words is superior
over other codebooks with $1M$ visual words.

\section{Literature review}
Given a set of points $\{\mathbf{x}_1, \mathbf{x}_2, \cdots,
\mathbf{x}_n\}$, where each point is a $d$-dimensional vector,
$k$-means clustering aims to partition these $n$ points into $k$ ($k \leqslant n$)
groups, $\mathcal{G} = \{\mathcal{G}_1,
\mathcal{G}_2, \cdots, \mathcal{G}_k\}$, by minimizing the
within-cluster sum of squared distortions (WCSSD):
\begin{align}
J(\mathcal{C}, \mathcal{G}) = \sum\nolimits_{j=1}^k \sum\nolimits_{\mathbf{x}_i \in \mathcal{G}_j} \|\mathbf{x}_i - \mathbf{c}_j\|_2^2,
\label{eqn:WCSS}
\end{align}
where $\mathbf{c}_j$ is the center of cluster $\mathcal{G}_j$,
$\mathbf{c}_j = \frac{1}{|\mathcal{G}_j|} \sum_{\mathbf{x}_i \in \mathcal{G}_j}  \mathbf{x}_i$,
and $\mathcal{C} = \{\mathbf{c}_1, \cdots, \mathbf{c}_k\}$.
In the following,
we use {\em group} and {\em cluster} interchangeably.

\subsection{Lloyd algorithm}
Minimizing the objective function in Equation~\ref{eqn:WCSS} is
NP-hard in many cases~\cite{MahajanNV09}. Thus, various
heuristic algorithms are used in practice,
and $k$-means (or Lloyd's algorithm)~\cite{Forgy65,
Lloyd82, MacQueen67} is the most commonly used algorithm.
It starts from a set of $k$ cluster centers (obtained from priors or random initialization)
$\{\mathbf{c}_1^{(1)}, \cdots, \mathbf{c}_k^{(1)}\}$, and then
proceeds by alternating the following two steps:
\begin{itemize}
  \item \textbf{Assignment step:} Given the current set of
      $k$ cluster centers, $\mathcal{C}^{(t)} = \{\mathbf{c}_1^{(t)},
      \cdots, \mathbf{c}_k^{(t)}\}$, assign each point
      $\mathbf{x}_i$ to the cluster whose center is the closest to $\mathbf{x}_i$:
  \begin{align}
  z_i^{(t+1)} = \arg\min\nolimits_{j} \|\mathbf{x}_i - \mathbf{c}_j^{(t)}\|_2.
  \end{align}
  \item \textbf{Update step:} Update the points in each
      cluster, $\mathcal{G}_j^{(t+1)} = \{\mathbf{x}_i |
      z_i^{(t+1)} = j\}$, and compute the new center for each cluster,
      $\mathbf{c}_j^{(t+1)} =
      \frac{1}{|\mathcal{G}_j^{(t+1)}|} \sum_{\mathbf{x}_i
      \in \mathcal{G}_j^{(t+1)}}  \mathbf{x}_i$.
\end{itemize}
The computational complexity for the above
assignment step and the update step is $O(ndk)$ and $O(nd)$, respectively.
Various speedup algorithms have been developed
by making the complexity of the assignment step
less than the linear time (e.g., logarithmic time) with respect to
$n$ (the number of the data points),
$k$ (the number of clusters),
and
$d$ (the dimension of the data pint).
In the following,
we present a short review
mainly on handling large $n$ and $k$.

\subsection{Handling large data}
\mytextbf{Distance computation elimination.}
Various approaches have been proposed to speed up exact $k$-means.
An accelerated algorithm is proposed
by using the triangle inequality~\cite{Elkan03}
and keeping track of lower and upper
bounds for distances between points and centers
to avoid unnecessary distance calculations
but requires $O(k^2)$ extra storage,
rendering it impractical for a large number of clusters.

\mytextbf{Subsampling.}
An alternative solution to speed up $k$-means
is based on sub-sampling the data points.
One way is to run $k$-means over sub-sampled data points,
and then to directly assign the remaining points to the clusters.
An extension of the above solution is to optionally
add the remaining points incrementally,
and to rerun $k$-means to get a finer clustering.
The former scheme is not applicable in many applications.
As pointed in~\cite{PhilbinCISZ07},
it results in less accurate clustering
and lower performance in image retrieval applications.
The Coremeans algorithm~\cite{FrahlingS08} uses the latter scheme.
It begins with a coreset
and incrementally increases the size of the coreset.
As pointed out in~\cite{FrahlingS08},
Coremeans works well only for a small number of clusters.
Consequently, those methods are not suitable
for large-scale clustering problems,
especially for problems with a large number of clusters.

\mytextbf{Data organization.}
The approach in~\cite{KanungoMNPSW02} presents
a filtering algorithm.
It begins by storing the data points in a $k$-d tree
and maintains,
for each node of the tree,
a subset of candidate centers.
The candidates for each node are pruned or filtered,
as they propagate to the children,
which eliminates the computation time
by avoiding comparing each center with all the points.
But as this paper points out,
it works well only when the number of clusters is small.

In the community of document processing, Canopy clustering~\cite{McCallumNU00},
which is closely related to our approach,
first divides the data points
into many overlapping subsets (called canopies),
and clustering is performed
by measuring exact distances
only between points that occur within a common canopy.
This eliminates a lot of unnecessary distance computations.

\subsection{Handling large clusters}
\mytextbf{Hierarchical $k$-means.}
The hierarchical $k$-means (HKM) uses a clustering tree instead of flat $k$-means~\cite{NisterS06}
to reduce the number of clusters in each assignment step.
It first clusters the points into a small number (e.g., 10) of clusters,
then recursively divides each cluster
until a certain depth $h$ is reached.
The leaves in the resulted clustering tree are
considered to be the final clusters.
(For $h=6$, one obtains one million clusters.)

Suppose
that the data points associated with each node of the hierarchial tree
are divided
into a few (e.g., a
constant number $\bar{k}$, much smaller than $k$) subsets
(clusters).
In each recursion, each point can only be assigned to one
of the $\bar{k}$ clusters, and the depth of the recursions is
$O(\log n)$.
The computational cost is $O(n\log n)$ (ignoring the
small constant number $\bar{k}$).

\mytextbf{Approximate $k$-means.}
In~\cite{PhilbinCISZ07} approximate nearest neighbor (ANN) search replaces
the exact nearest neighbor (NN) search in the assignment step when searching
for the nearest cluster center for each point.
In particular, the current cluster centers
in each $k$-means iteration are organized
by a forest of $k$-d trees to perform an accelerated approximate NN search.
The cost of the assignment step is reduced to $O(k\log k + M n \log k) =
O(Mn\log k)$, with $M$ being the number of accessed nearest
cluster candidates in the $k$-d trees.
Refined-AKM (RAKM)~\cite{Philbin10a} further improves the convergence speed by
by enforcing constraints of non-increasing objective values during the iterations.
Both AKM and RAKM require a considerable overhead of constructing $k$-d trees
in each $k$-means iteration, thus a trade-off between the speed and
the accuracy of the nearest neighbor search has to be made.

\subsection{Others}
There are some other complementary works in improving $k$-means clustering.
In~\cite{Sculley10}, the update step is speeded up
by transforming a batch update to a mini-batch update.
The high-dimensional issue has also been addressed
by using dimension reduction,
e.g., random projections~\cite{BoutsidisZ10, FernB03a} and product quantization~\cite{JegouDS11}.

Object discovery and mining from spatially related images
is one topic that is related to image clustering~\cite{ChumM10, LiWZLF08, PhilbinZ08, RaguramWFL11, SimonSS07},
which also aims to cluster the images
so that each group contains the same object.
This is a potential application
of our scalable $k$-means algorithm.
In~\cite{WangWZTGL12,WangWZTGL13},
we introduce an algorithm
of clustering spatially-related images
based on the neighborhood graph.
The idea of constructing the neighborhood graph
is to adopt multiple spatial partition trees,
which is similar to the idea of this paper.

\section{$K$-means with cluster closures}
In this section,
we first introduce the proposed approach,
then give the analysis and discussions,
and finally present the implementation details.

\begin{figure}[t]
\sidecaption[t]
\includegraphics[width=0.6\linewidth, clip]{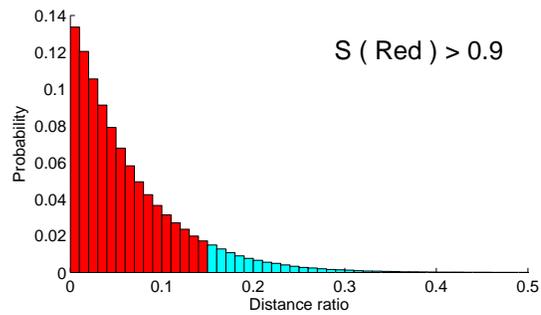}
\caption{The distribution of the distance ratio. It shows that
most active points have smaller distance ratio and lie near
some cluster boundaries} \label{fig:boundarypoints}
\end{figure}

\begin{figure}[b]
\sidecaption[t]
\includegraphics[width = .4\linewidth,clip]{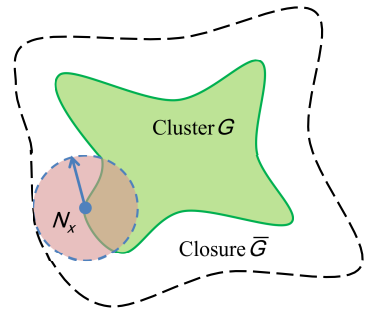}
\caption{Illustration of uniting neighborhoods
to obtain the closure. The black dash line indicates the closure of
cluster $\mathcal{G}$}\label{fig:clusterneighborhoods}
\end{figure}

\begin{figure}[t]
\sidecaption[t]
\includegraphics[width = .6\linewidth,clip]{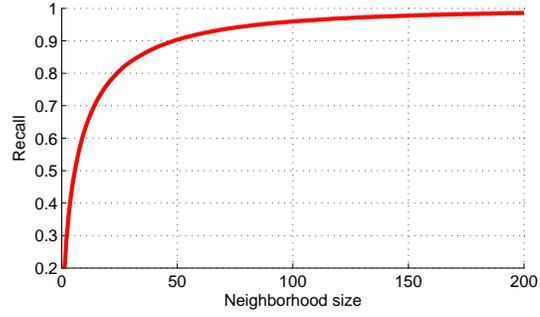}
\caption{The coverage of the active points by the closure w.r.t.
the neighborhood size. A neighborhood of size 50 has about
$90\%$ coverage}\label{fig:recallneighbor}
\end{figure}

\subsection{Approach}
\mytextbf{Active points.}
$K$-means clustering partitions the data space into Voronoi
cells -- each cell is a cluster and the cluster center is the
center of the cell. In the assignment step, each point $\mathbf{x}$
is assigned to its nearest cluster center. We call points that
change cluster assignments in an iteration \emph{active points}. In other
words, $\mathbf{x}$ changes cluster membership from the $i$-th cluster to
the $j$-th cluster because $d(\mathbf{x}, \mathbf{c}_j) < d(\mathbf{x}, \mathbf{c}_i)$,
where $d(\cdot)$ is the distance function.

We observe that \emph{active points are close to the boundary}
between $\mathbf{c}_j$ and $\mathbf{c}_i$. To verify this, we define
\emph{distance ratio} for an active point $\mathbf{x}$ as:
$ r(\mathbf{x}) = 1 - \frac{d(\mathbf{x},\mathbf{c}_j)}{d(\mathbf{x},\mathbf{c}_i)}$.
The distance ratio $r(\mathbf{x})$ is in the range of $(0,1]$,
since we only compute distance ratio for active points.
Smaller values of $r$ mean closer to the cluster boundaries.
Figure~\ref{fig:boundarypoints} shows the distribution of
distance ratios when clustering $1M$ GIST features
from the Tiny image data set (described in~\ref{subsec:datasets}) to $10K$ clusters.
We can see that most active points have small distance ratios,
e.g. more than $90\%$ of the active points have a distance ratio less than $0.15$
(shown in the red area),
and thus lie near to cluster boundaries.

During the assignment step,
we only need to identify the active points
and change their cluster memberships.
The above observation that active points lie close
to cell boundaries suggests a novel approach to
speed up the assignment step
by identifying active points around cell boundaries.

\mytextbf{Cluster closures.}
Assume for now that we have identified the neighborhood of
a given point $\mathbf{x}$,
a set of points
containing $\mathbf{x}$'s neighboring points and itself,
denoted by $\mathcal{N}_x$. We define the
{\em closure} of a cluster $\mathcal{G}$ as:
\begin{equation}\label{eq:group-closure}
  \bar{\mathcal{G}} = \bigcup\nolimits_{\mathbf{x} \in \mathcal{G}} \mathcal{N}_{x}.
\end{equation}
Figure~\ref{fig:clusterneighborhoods} illustrates the
relationship between the cluster, the
neighborhood points, and the closure.

If active points are on the cluster boundaries,
as we have observed,
then by increasing the neighborhood size $\mathcal{N}_x$,
the group closure $\bar{\mathcal{G}}$ will be accordingly
expanded to cover more active points that will be assigned to
this group $\mathcal{G}$ in the assignment step.
Figure~\ref{fig:recallneighbor} shows the recall
(of an active point being covered by the closure of its newly assigned cluster)
vs. the neighborhood size of $\mathcal{N}_x$
over the Tinyimage data set
describe in Section~\ref{subsec:datasets}.
Similar results are also observed in other data sets.
As we can see, with a neighborhood size as small as $50$,
about $90\%$ of the active points are covered
by the closures of the clusters to
which these active points will be re-assigned.

We now turn to the question of how to efficiently compute the
neighborhood $\mathcal{N}_x$ of a given point $\mathbf{x}$
used in Equation~\ref{eq:group-closure}. We propose an ensemble
approach using multiple random spatial partitions. A single
approximate neighborhood for each point can be derived from a
random partition (RP) tree~\cite{VermaKD09}, and the final
neighborhood is assembled by combining the results from multiple random
spatial partitions.
Suppose that a leaf node of a single RP tree,
contains a set of points $\mathcal{V} = \{\mathbf{x}_{j}\}$,
we consider all the points in $\mathcal{V}$ to be mutually
neighboring to each other. Thus the neighborhood of a point
$\mathbf{x}$ in the set $\mathcal{V}$ can be straightforwardly
computed by $\mathcal{N}_{x} = \mathcal{V}$.

Since RP trees are efficient to construct, the above
neighborhood computation is also efficient. While the group
closure from one single RP tree may miss some active points,
using multiple RP trees effectively handles this problem. We
simply unite the neighborhoods of $\mathbf{x}$ from all the RP trees:
\begin{displaymath}
 \mathcal{N}_{x} = \bigcup\nolimits_l \mathcal{V}_l.
\end{displaymath}
Here $\mathcal{V}_l$ is a set of points in the leaf
from the $l$-th RP tree that contains $\mathbf{x}$.
Note that a point $\mathbf{x}$ may belong to multiple group closures.
Also note that the neighborhood of a given point is computed only once.

\mytextbf{Fast assignment.}
With the group closures $\{\bar{\mathcal{G}}_j\}$ computed from
Equation~\ref{eq:group-closure}, the assignment step can be
done by verifying whether a point belonging to the closure
$\bar{\mathcal{G}}_j$ should indeed be assigned to the cluster
$\mathcal{G}_j$:
\begin{itemize}
\item \textbf{Initialization step:} Initialize the distance
    array $D[1:n]$ by assigning an positive infinity value to each
    entry.
  \item \textbf{Closure-based assignment:} \\
~~~~~~For each cluster closure $\{\bar{\mathcal{G}}_j\}$:\\
~~~~~~~~~~~~~~~~~~~~~~\hspace{4mm} For each point
$\mathbf{x}_{i}^s \in \bar{\mathcal{G}}_j, s=1,2,...,|\bar{\mathcal{G}}_j|$:
\begin{align}
\text{~~if:~~~~~~} & \|\mathbf{x}_i^s - \mathbf{c}_j^{(t)}\|_2^2 < D[i], \nonumber\\
\text{then:~~~~~~} & z_{i}^{(t+1)} = j, \nonumber\\
                   & D[i] = \|\mathbf{x}_{i}^s - \mathbf{c}_j^{(t)}\|_2^2. \nonumber
\end{align}
Here $\mathbf{c}_j^{(t)}$ is the cluster center of
$\mathcal{G}_j$ at the $t$-th iteration,
$i$ is the global index for $\mathbf{x}$ and
$s$ is the index into $\bar{\mathcal{G}}_j$ for point $\mathbf{x}_i$.
\end{itemize}


In the assignment step, we only need to compute the distance
from the center of a cluster to each point in the cluster
closure. A point typically belongs to a small number of cluster
closures.  Thus, instead of computing the distances from a point $\mathbf{x}$ to all cluster
centers in exact $k$-means, or constructing $k$-d trees of all
cluster centers at each iteration to find the approximate nearest
cluster center, we only need to compute the distance from $\mathbf{x}$ to a small
number of cluster centers whose cluster closures contain $\mathbf{x}$,
resulting in a significant reduction in computational cost.
Moreover, the fact that active points are close to cluster
boundaries is the worst case for $k$-d trees to find the nearest
neighbor. On the contrary, such a fact is advantageous for our
algorithm.

\subsection{Analysis}
\mytextbf{Convergence.} The following shows that our
algorithm always converges. Since the objective function
$J(\mathcal{C}, \mathcal{G})$ is lower-bounded, the convergence
can be guaranteed if the objective value does
not increase at each iterative step.

\begin{theorem}[Non-increase]
The value of the objective function does not increase at each iterative step, i.e.,
\begin{equation}\label{eq:theo1}
J(\mathcal{C}^{(t + 1)}, \mathcal{G}^{(t + 1)})
\leqslant  J(\mathcal{C}^{(t)}, \mathcal{G}^{(t)}).
\end{equation}
\label{lemma:convergence}
\end{theorem}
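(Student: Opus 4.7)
The plan is to follow the classical two-step argument for Lloyd's algorithm, but with a careful verification that the modified (closure-based) assignment step still cannot increase the per-point cost, even though it does not examine all $k$ centers.

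Split one iteration into the assignment step and the update step, and write
\begin{align*}
J(\mathcal{C}^{(t)}, \mathcal{G}^{(t)})
&\;\geqslant\; \sum\nolimits_i \|\mathbf{x}_i - \mathbf{c}_{z_i^{(t+1)}}^{(t)}\|_2^2 \\
&\;\geqslant\; \sum\nolimits_i \|\mathbf{x}_i - \mathbf{c}_{z_i^{(t+1)}}^{(t+1)}\|_2^2
\;=\; J(\mathcal{C}^{(t+1)}, \mathcal{G}^{(t+1)}).
\end{align*}
The second inequality is the standard update-step fact: for each cluster $\mathcal{G}_j^{(t+1)}$, the centroid $\mathbf{c}_j^{(t+1)}$ minimizes $\sum_{\mathbf{x}_i\in\mathcal{G}_j^{(t+1)}} \|\mathbf{x}_i - \mathbf{c}\|_2^2$ over $\mathbf{c}\in\mathbb{R}^d$, which I would invoke without calculation.

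The first inequality is the step that requires the closure construction. I would prove it pointwise by showing that for every $i$, after the closure-based assignment terminates,
\[
D[i] \;=\; \|\mathbf{x}_i - \mathbf{c}_{z_i^{(t+1)}}^{(t)}\|_2^2 \;\leqslant\; \|\mathbf{x}_i - \mathbf{c}_{z_i^{(t)}}^{(t)}\|_2^2.
\]
The key observation is that $\mathbf{x}_i$ always belongs to the closure of its own current cluster: by construction $\mathbf{x}_i \in \mathcal{V}_l$ for each RP-tree leaf containing it, hence $\mathbf{x}_i \in \mathcal{N}_{x_i}$, and since $\mathbf{x}_i \in \mathcal{G}_{z_i^{(t)}}^{(t)}$ the union in Equation~\ref{eq:group-closure} gives $\mathbf{x}_i \in \bar{\mathcal{G}}_{z_i^{(t)}}^{(t)}$. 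Therefore at some point during the outer loop the algorithm does compare $D[i]$ against $\|\mathbf{x}_i - \mathbf{c}_{z_i^{(t)}}^{(t)}\|_2^2$, so $D[i]$ is at most this value upon termination, and the updated label $z_i^{(t+1)}$ records a center achieving this bound. Summing over $i$ yields the first inequality.

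The main obstacle is precisely this coverage fact, because the whole speedup comes from not scanning all clusters in the assignment step, so one must rule out the pathological case of a point lying in no closure. Once that is established via the $\mathbf{x}_i \in \mathcal{N}_{x_i}$ property, the remainder is just the standard Lloyd monotonicity argument. I would conclude by noting that the inequality chain gives Equation~\ref{eq:theo1} and, since $J\geqslant 0$, convergence of the sequence $\{J(\mathcal{C}^{(t)},\mathcal{G}^{(t)})\}$ follows immediately.
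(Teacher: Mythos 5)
Your proof is correct and follows essentially the same two-step decomposition as the paper's: the assignment step cannot increase the cost because each point's current center remains among the candidates it is compared against, and the update step cannot increase it because the centroid minimizes the within-cluster sum of squares. The only substantive difference is one of emphasis --- you explicitly verify the coverage fact $\mathbf{x}_i \in \mathcal{N}_{x_i} \subseteq \bar{\mathcal{G}}_{z_i^{(t)}}$, which the paper asserts implicitly when it states that the previous centers ``are cluster candidates,'' while the paper writes out the centroid-optimality calculation that you invoke without proof.
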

\begin{proof}
\smartqed
In the assignment step for the $(t+1)$-th iteration,
$\{{c}^{(t)}_k\}$ computed from the $t$-th iteration
are cluster candidates.
$\mathbf{x}_i$ would change its cluster membership only if it finds a closer cluster center, thus we have $\| \mathbf{x}_i -
\mathbf{c}^{(t)}_{z_i^{(t + 1)}} \|_2 \leqslant \| \mathbf{x}_i
- \mathbf{c}^{(t)}_{z_i^{(t)}} \|_2$, and Equation~\ref{eq:theo1} holds for the assignment step.

In the update step, the cluster center will then be update based on the new point assignments. We now show that this update will not increase the within-cluster sum
of squared distortions, or in a more general form:
\begin{align}\label{eqn:non-increase1}
\sum\nolimits_{\mathbf{x} \in \mathcal{G}_j} \| \mathbf{x} - \bar{\mathbf{c}}_j\|_2^2
\leqslant \sum\nolimits_{\mathbf{x} \in \mathcal{G}_j} \| \mathbf{x} - \mathbf{c}\|_2^2,
\end{align}
where $\bar{\mathbf{c}}_j$ is the $j$-th updated cluster center $\bar{\mathbf{c}}_j =
\frac{1}{|\mathcal{G}_j|}\sum_{\mathbf{x} \in
\mathcal{G}_j}\mathbf{x}$, and $\mathbf{c}$ is an arbitrary point in the data space.
Equation~\ref{eqn:non-increase1} can be verified by the following:
\begin{align}
&~\sum\nolimits_{\mathbf{x} \in \mathcal{G}_j} \| \mathbf{x} - \mathbf{c}\|_2^2 \nonumber \\
= &~\sum\nolimits_{\mathbf{x} \in \mathcal{G}_j} \| (\mathbf{x} - \bar{\mathbf{c}}_j) + (\bar{\mathbf{c}}_j - \mathbf{c})\|_2^2 \nonumber \\
= &~\sum\nolimits_{\mathbf{x} \in \mathcal{G}_j} \| \mathbf{x} - \bar{\mathbf{c}}_j\|_2^2 + 2 (\bar{\mathbf{c}}_j - \mathbf{c})^T \sum\nolimits_{\mathbf{x} \in \mathcal{G}_j} (\mathbf{x} - \bar{\mathbf{c}}_j) \nonumber \\
&~+ |\mathcal{G}_j| \|\bar{\mathbf{c}}_j - \mathbf{c}\|_2^2 \nonumber \\
= &~ \sum\nolimits_{\mathbf{x} \in \mathcal{G}_j} \| \mathbf{x} - \bar{\mathbf{c}}_j\|_2^2 + |\mathcal{G}_j| \|\bar{\mathbf{c}}_j - \mathbf{c}\|_2^2 \nonumber \\
\geqslant&~ \sum\nolimits_{\mathbf{x} \in \mathcal{G}_j} \| \mathbf{x} - \bar{\mathbf{c}}_j\|_2^2.
\end{align}
Thus Equation~\ref{eq:theo1} holds for the update step.
\qed
\end{proof}

\mytextbf{Accuracy.} Our algorithm obtains the same result
as the exact Lloyd's algorithm
if the closures of the
clusters are large enough,
in such a way all the points that would have been assigned to the $j$-th cluster
when using the Lloyd's algorithm belong to the cluster closure $\bar{\mathcal{G}}_j$.
However, it should be noted that this condition is sufficient but not necessary. In practice,
even with a small neighborhood,
our approach often obtains results similar to using the exact Lloyd's
algorithm. The reason is that the missing points, which should have been assigned
to the current cluster at the current iteration but are missed,
are close to the cluster boundary
thus likely to appear
in the closure of the new clusters
updated by the current iteration.
As a result, these missing points are very likely
to be correctly\footnote{``Correctly" w.r.t. assignments if produced by Lloyd's algorithm.} assigned in the next iteration.

\mytextbf{Complexity.}
Consider a point $\mathbf{x}_i$ and its neighborhood $\mathcal{N}_{x_i}$, the
possible groups that may absorb $\mathbf{x}_i$ are
$\tilde{\mathcal{G}}_{x_i} = \{\mathcal{G}_j | ~\exists~\mathbf{x}_j \mbox{ s.t. } \mathbf{x}_j\in
\mathcal{G}_j \mbox{ and } \mathbf{x}_j \in \mathcal{N}_{x_i}\}$. As a result, we have
$|\tilde{\mathcal{G}}_{x_i}| \leqslant |\mathcal{N}_{x_i}|$. In our implementation, we use balanced
random bi-partition trees, with each leaf node containing
$c$  points ($c$ is a small number). Suppose we use $m$ random
partition trees. Then the neighborhood
size of a point will not be larger than $M = cm$. As a result,
the complexity of the closure-based assignment step is $O(nM)$.

For the complexity of constructing trees,
our approach constructs a RP-tree in $O(n\log n)$
and AKM costs $O(k\log k)$ to build a $k$d-tree.
However, our approach only needs a small number
(typically $10$ in our clustering experiments) of trees
through all iterations, but AKM
requires constructing a number
(e.g., $8$ in~\cite{PhilbinCISZ07})
of trees in each iteration,
which makes the total cost more expensive.

\subsection{Discussion}
We present the comparison of our approach
with most relevant three algorithms,
Canopy clustering,
approximate $k$-means,
and
hierarchical $k$-means.

\mytextbf{Versus Canopy clustering.}
Canopy clustering, however, suffers from the canopy creation
whose cost is high for visual features.
More importantly,
it is non-trivial (1) to define a meaningful and efficient approximate distance function for visual data,
and (2) to tune the parameters for computing the canopy,
both of which are crucial to
the effectiveness and efficiency of Canopy clustering.
In contrast,
our approach is simpler and more efficient
because random partitions can be created
with a cost of only $O(n\log n)$.
Moreover, our method can adaptively update cluster member candidates,
in contrast to static canopies in~\cite{McCallumNU00}.

\mytextbf{Versus AKM.}
The advantages of the proposed approach
over AKM
are summarized as follows.
First,
the computational complexity of assigning
a new cluster to a point in our approach
is only $O(1)$,
while the complexity is $O(\log k)$ for AKM or RAKM.
The second advantage is
that we only need to organize the data points once
as the data points do not change during the iterations,
in contrast to AKM or RAKM that needs to construct the $k$-d trees
at each iteration as the cluster centers change from iteration to iteration.
Last,
It is shown that active points (points near cluster boundaries) present the worst case for ANN search
(used in AKM)
to return their accurate nearest neighbors.
In contrast, our approach is able to identify active points efficiently
and makes more accurate cluster assignment for active points
without the shortcoming in AKM.

\mytextbf{Versus HKM.}
As shown before,
HKM takes less time cost than AKM and our approach.
However,
its cluster accuracy is not as good
as HKM and our approach.
This is because
when assigning a point to a cluster (e.g., quantizing a feature descriptor)
in HKM,
it is possible that an error could be committed at a higher level of the tree,
leading to a sub-optimal cluster assignment and thus sub-optimal quantization.

\subsection{Implementation details}
\label{sec:implementation}

The random partition tree used for creating cluster closures
is a binary tree structure
that is formed by recursively splitting the space and
aims to organize the data points in a hierarchical manner.
Each node of the tree is associated
with a region in the space, called a cell.
These cells define a hierarchical decomposition of the space.
The root node $r$ is associated with the whole set of data points $\mathcal{X}$.
Each internal node $v$ is associated with a subset of data points $\mathcal{X}_v$ that lie
in the cell of the node.
It has two child nodes $\operatorname{left}(v)$ and $\operatorname{right}(v)$,
which correspond to two disjoint subsets of data points
$\mathcal{X}_{\operatorname{left}(v)}$ and $\mathcal{X}_{\operatorname{right}(v)}$.
The leaf node $l$ may be associated with a subset of data points
or only contain a single point.
In the implementation,
we use a random principal direction
to form the partition hyperplane to split the data points
into two subsets.
The principal directions are obtained by using
principal component analysis (PCA). To generate random
principal directions, rather than computing the principle
direction from the whole subset of points, we compute
the principal direction over the points randomly sampled
from each subset.
In our implementation, the principle
direction is computed by the Lanczos algorithm~\cite{Lanczos50}.

We use an adaptive scheme that
incrementally creates random partitions to automatically
expand the group closures on demand. At the beginning of our algorithm,
we only create one random partition tree. After each iteration,
we compute the reduction rate of the within-cluster sum of squared
distortions.
If the reduction rate in successive iterations is
smaller than a predefined threshold,
a new random partition tree is added to
expand points' neighborhood thus group closures.
We compare the adaptive neighborhood scheme
to a static one that computes
the neighborhoods altogether at the beginning (called static neighborhoods). As shown in
Figure~\ref{fig:adaptiverandomdivisions}, we can see that
the adaptive neighborhood scheme performs better in all the
iterations and hence
is adopted in the later comparison experiments.

The closure-based assignment step can be
implemented in another equivalent way. For each point
$\mathbf{x}$, we first identify the candidate centers by
checking the cluster memberships $\mathcal{Z}_x$ of the points within the neighborhood of $\mathbf{x}$. Here
$\mathcal{Z}_x = \{z(\mathbf{y}) \;|\; \mathbf{y} \in \mathcal{N}_x \}$, and $z(\mathbf{y})$ is the cluster membership of point $\mathbf{y}$. Then the best cluster
candidate for $\mathbf{x}$ can be found by checking the clusters
$\{\mathbf{c}_j \;|\; j \in \mathcal{Z}_x\}$.
In this equivalent implementation, the assignments are computed independently and can be naturally
parallelized. The update step computes the mean for each
the cluster independently, which can be naturally parallelized as well.
Thus, our algorithm can be easily parallelized.
We show the clustering performance with the parallel implementation (using multiple threads on multi-core CPUs) in Figure~\ref{fig:multithread}.

\begin{figure}[t]
\sidecaption[t]
\includegraphics[width = .45\linewidth, clip]{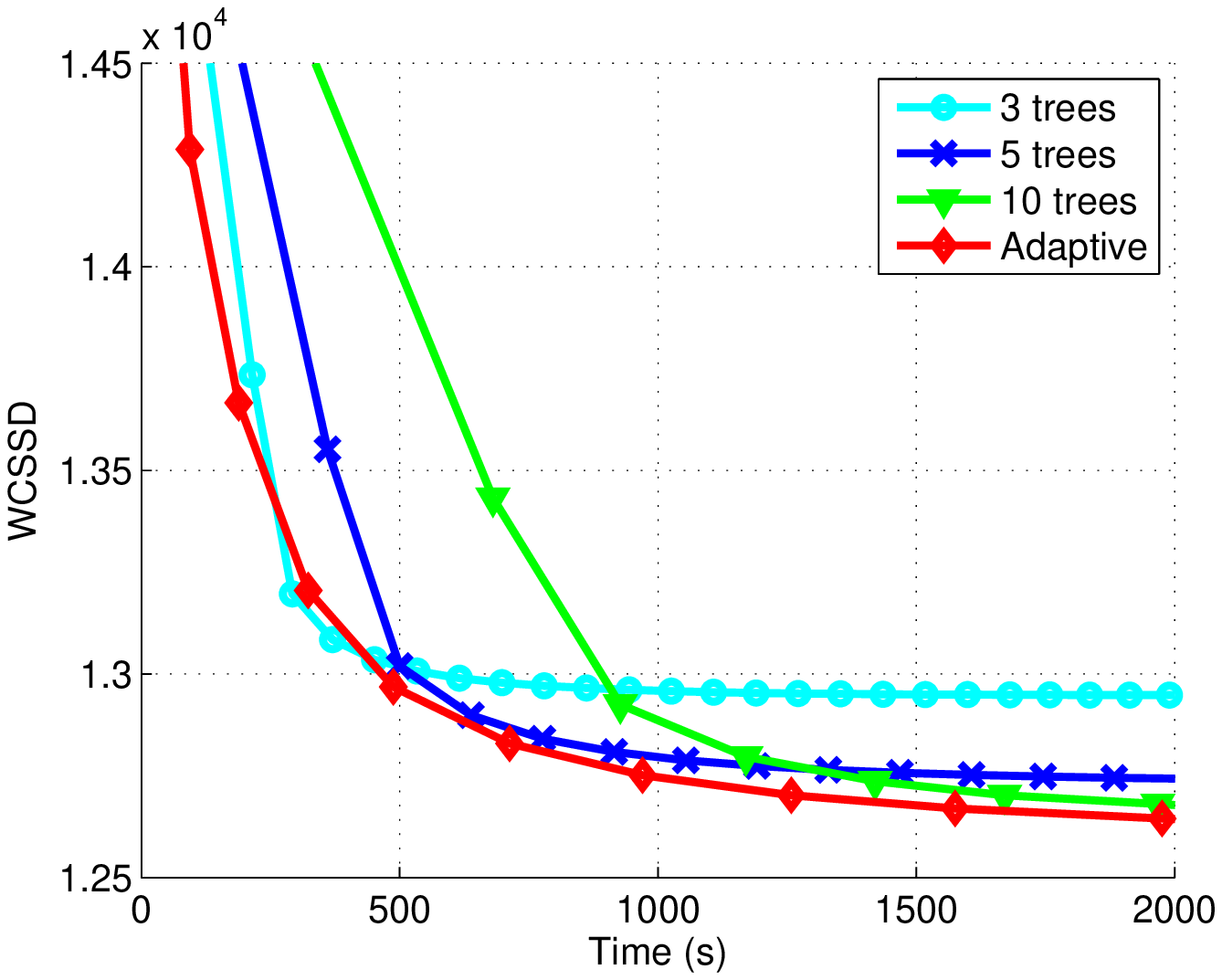}
\caption{Clustering performance with adaptive vs. static neighborhoods}
\label{fig:adaptiverandomdivisions}
\end{figure}

\begin{figure}[t]
\sidecaption[t]
\includegraphics[width = .45\linewidth, clip]{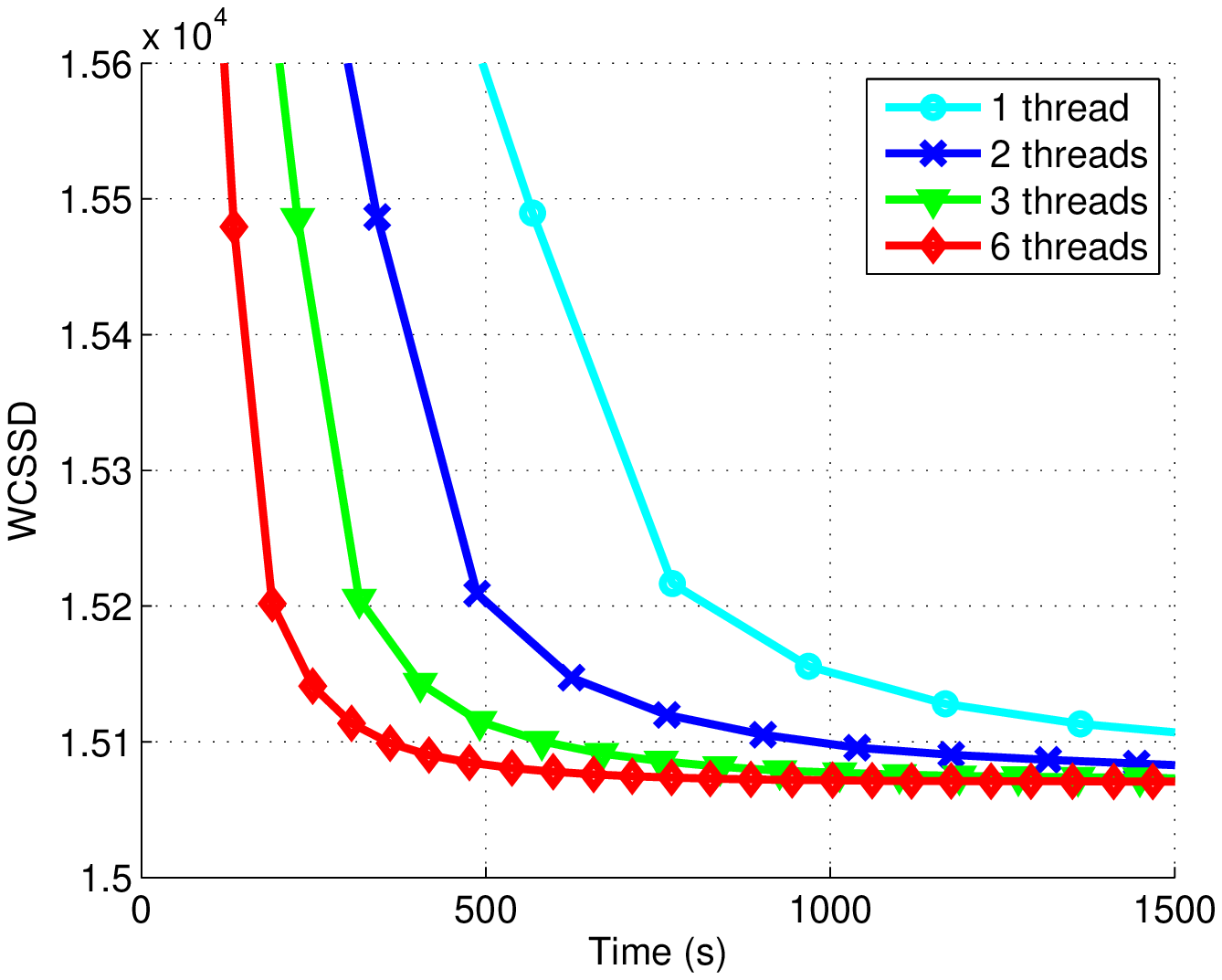}
\caption{Clustering performance with different numbers of threads}
\label{fig:multithread}
\end{figure}

\def \plotwidth {.3}
\begin{figure*}[t]
\centering
\includegraphics[width = \plotwidth\linewidth]{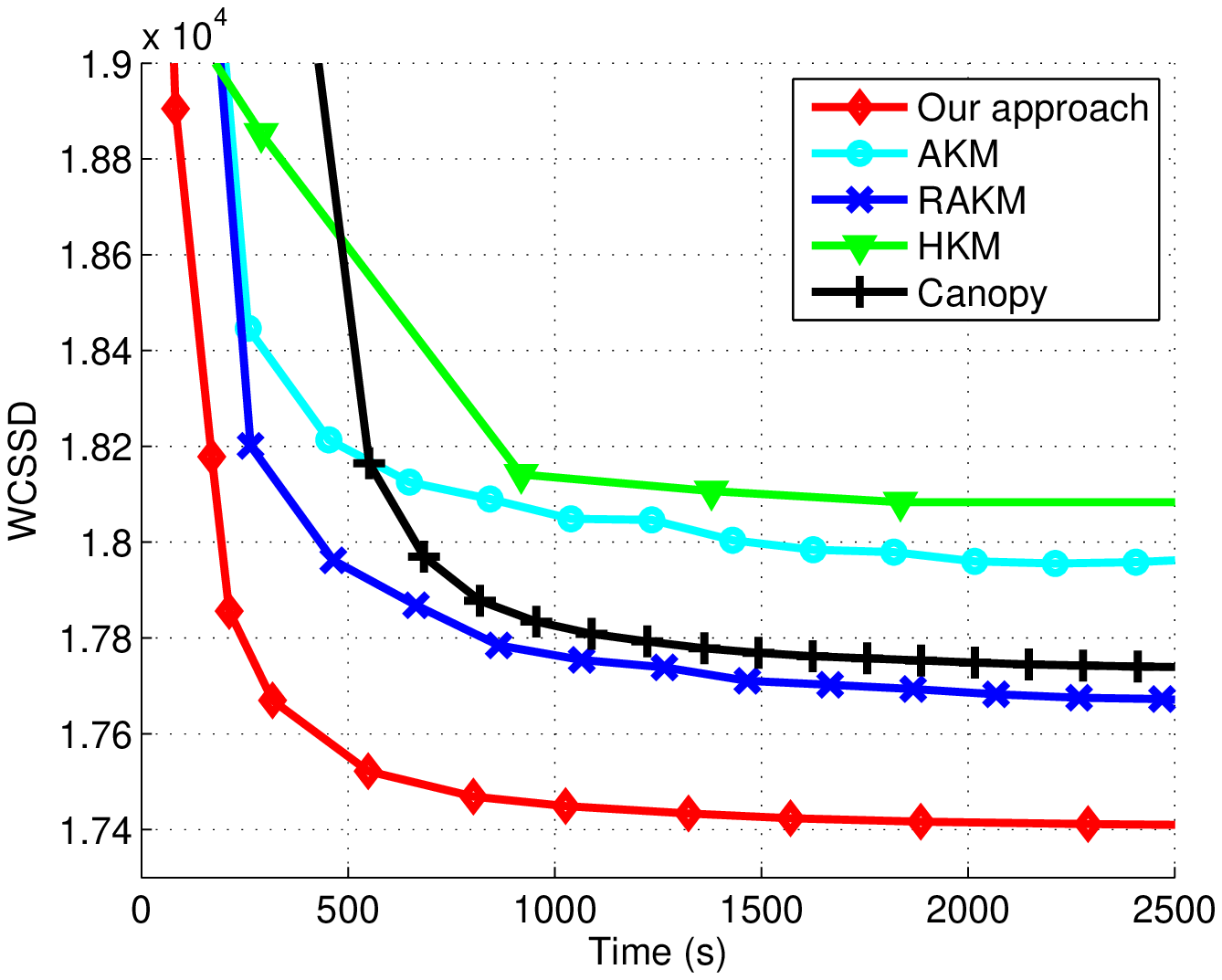}
~~~~\includegraphics[width = \plotwidth\linewidth]{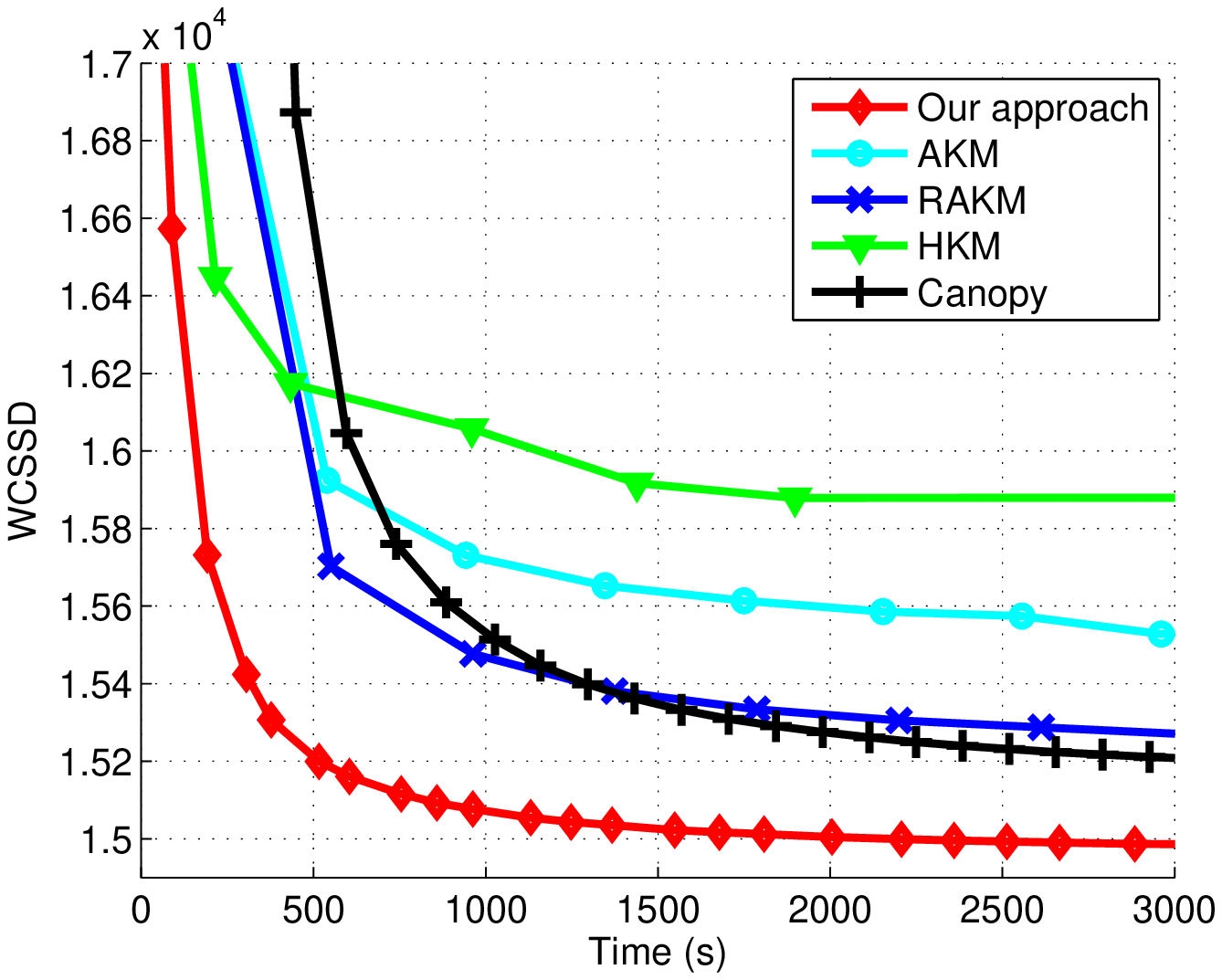}
~~~~\includegraphics[width = \plotwidth\linewidth]{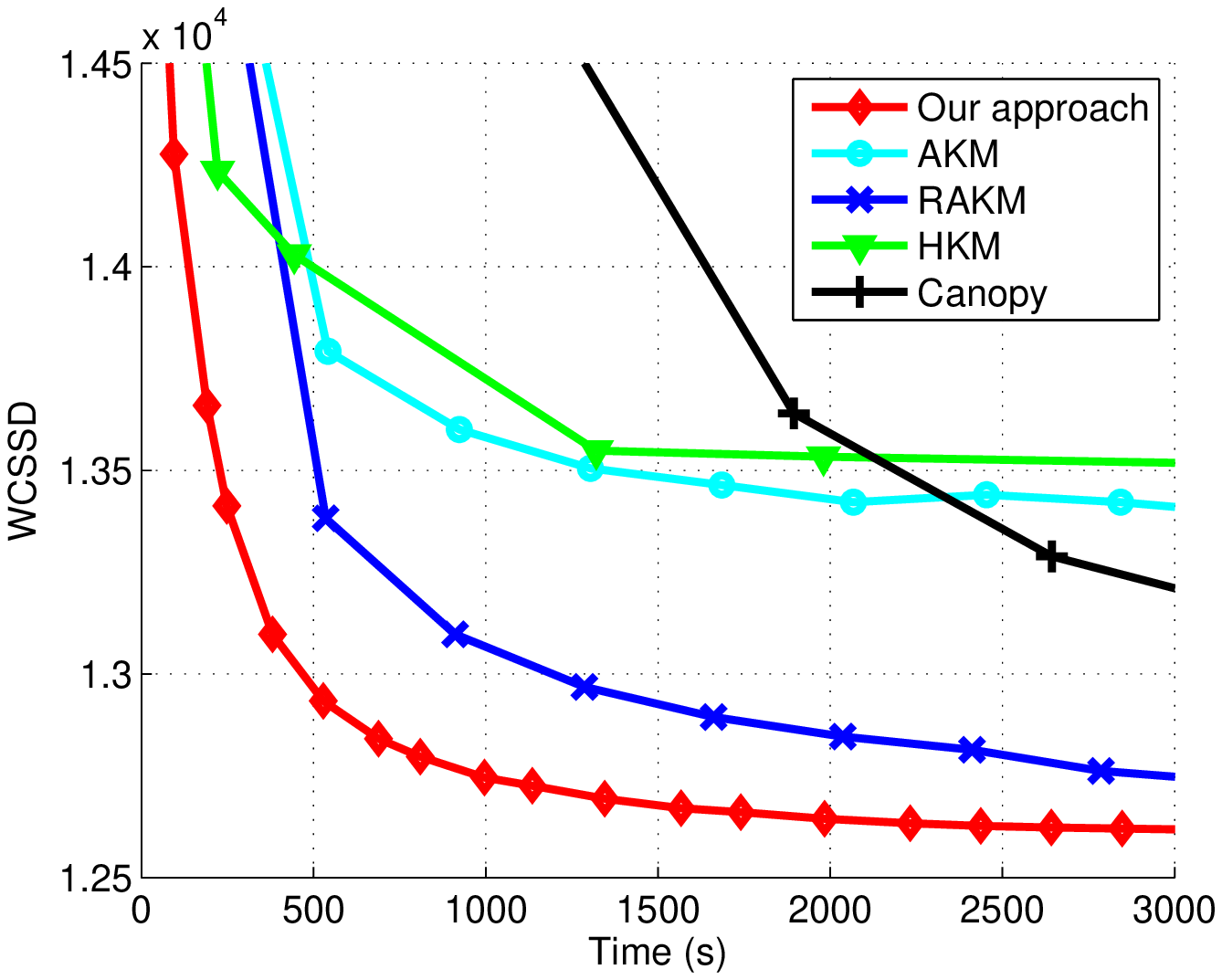}\\
\includegraphics[width = \plotwidth\linewidth]{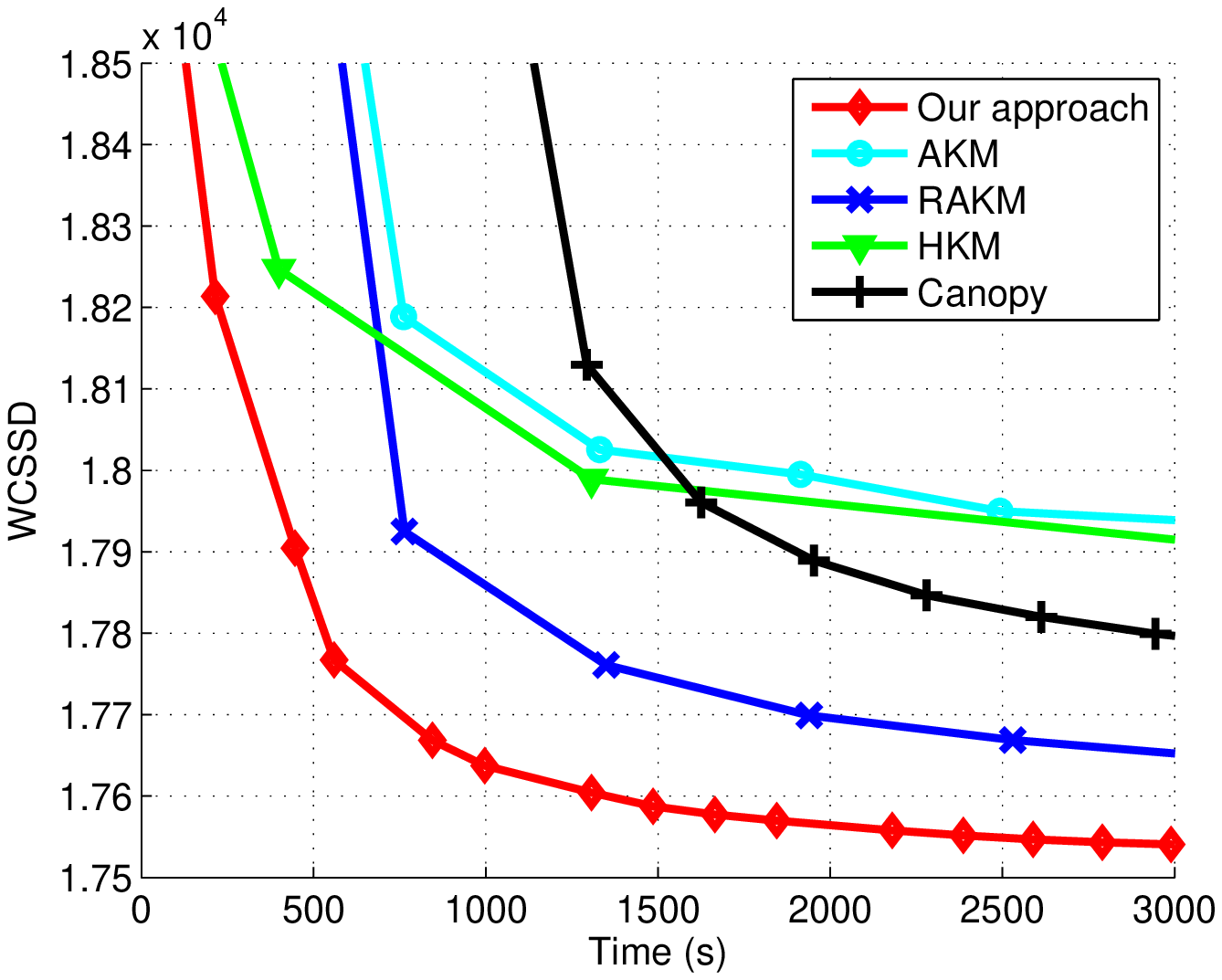}
~~~~\includegraphics[width = \plotwidth\linewidth]{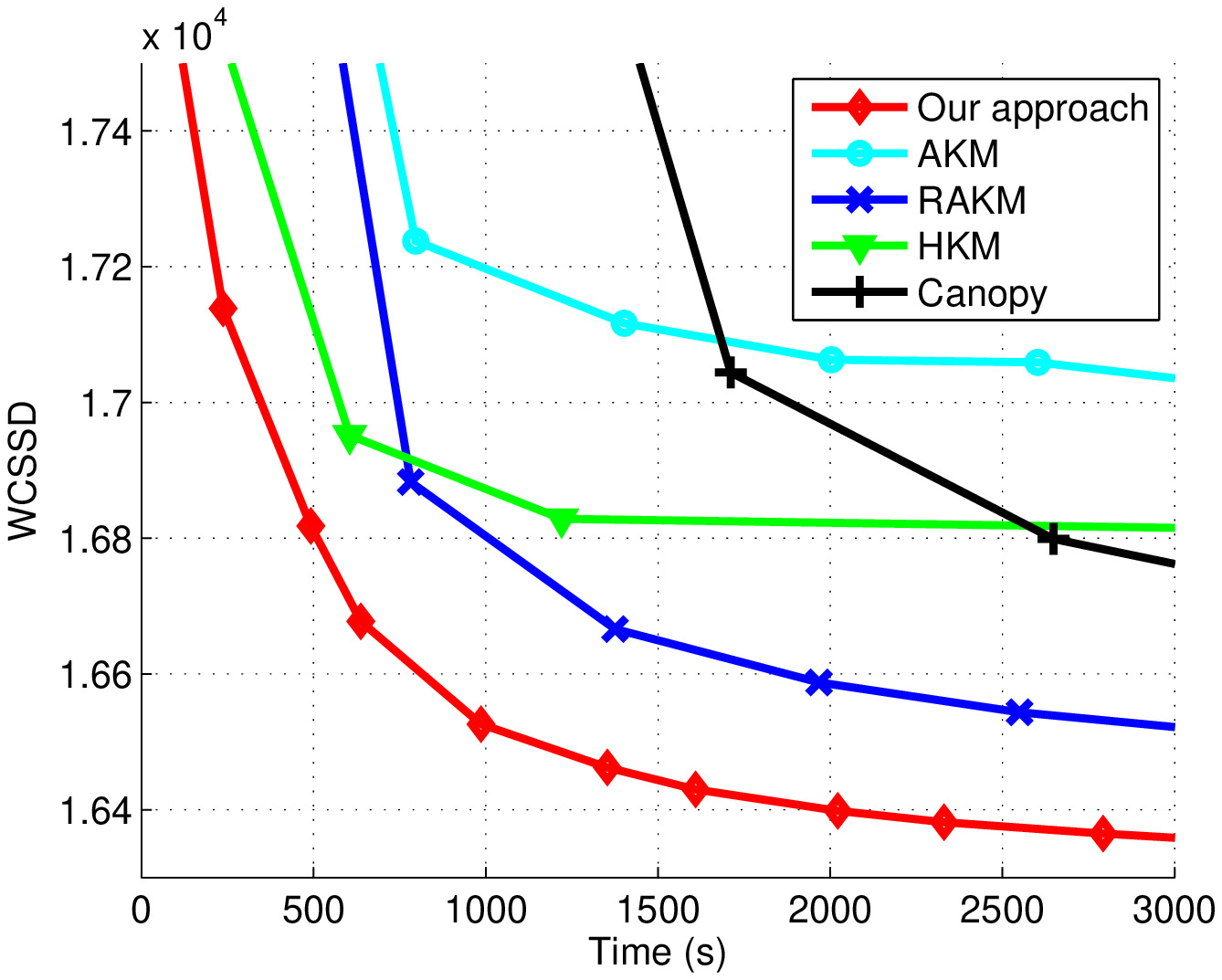}
~~~~\includegraphics[width = \plotwidth\linewidth]{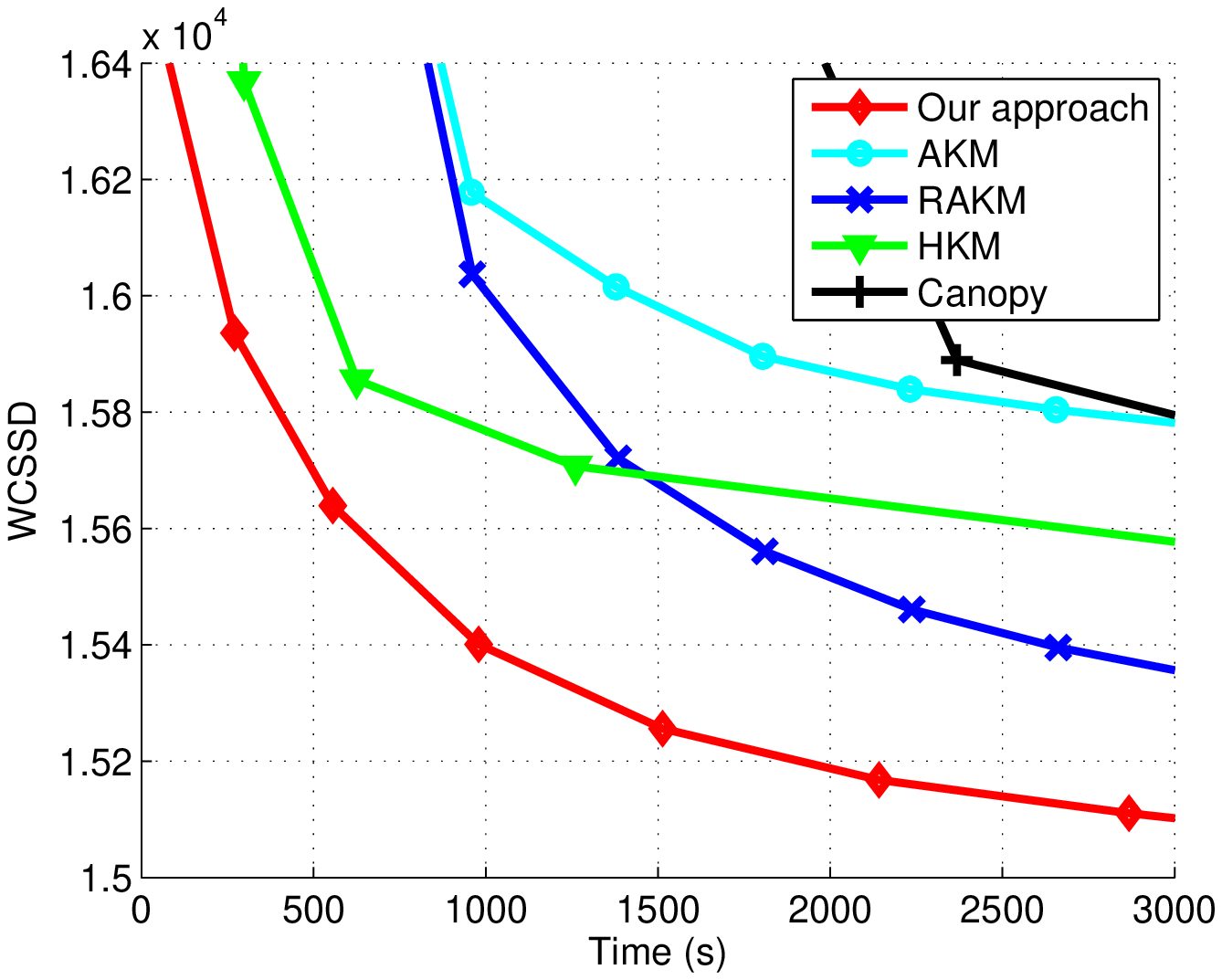}
\caption{Clustering performance in terms of within-cluster sum of squared distortions (WCSSD) vs. time.
The first row are the results of clustering 1M SIFT dataset into 0.5K, 2K and 10K clusters, respectively. The second row are results on 1M tiny image dataset}
\label{fig:WCSStime}
\end{figure*}

\begin{figure*}[t]
\centering
\subfigure[]{\includegraphics[width = \plotwidth\linewidth]{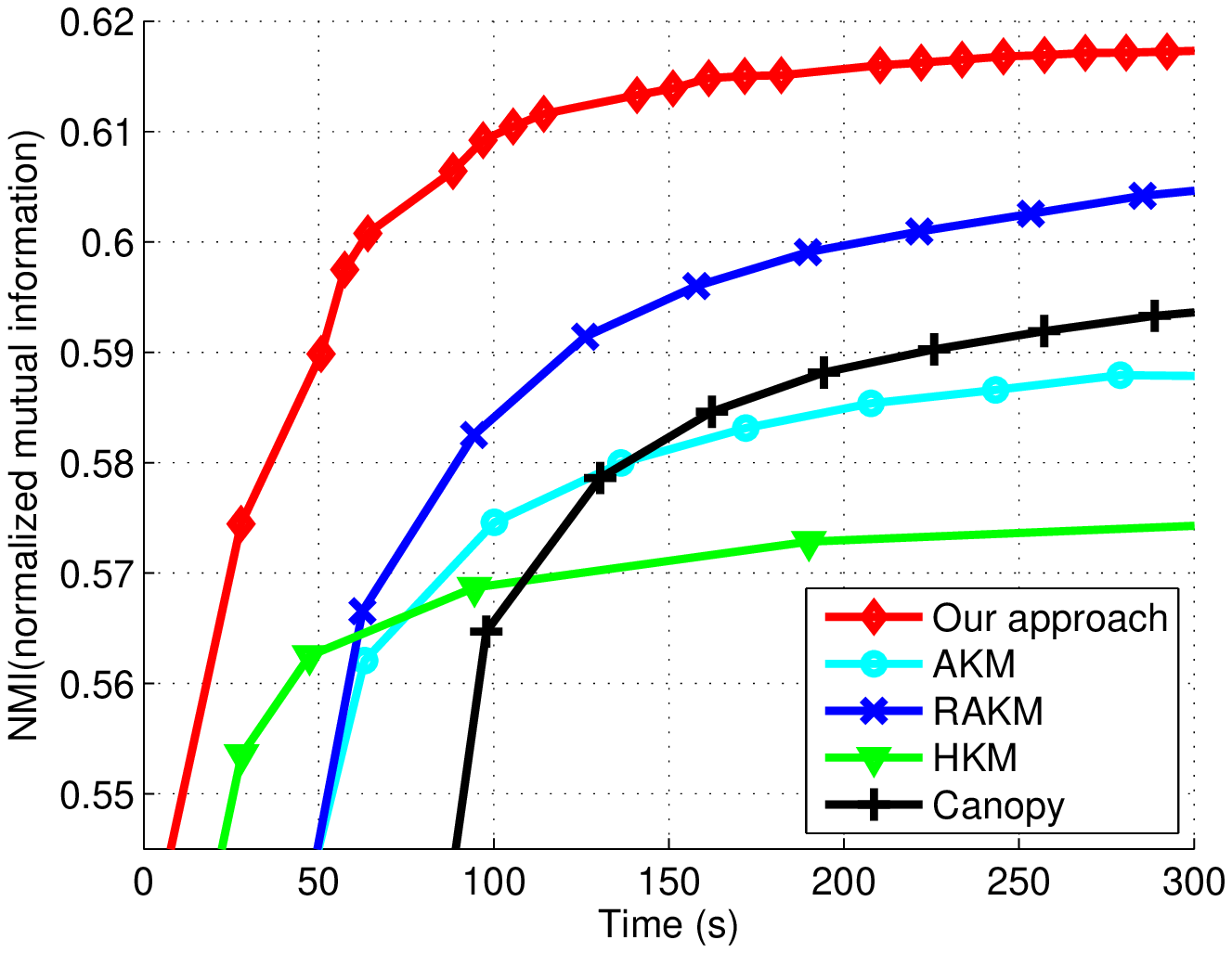}}
~~~~\subfigure[]{\includegraphics[width = \plotwidth\linewidth]{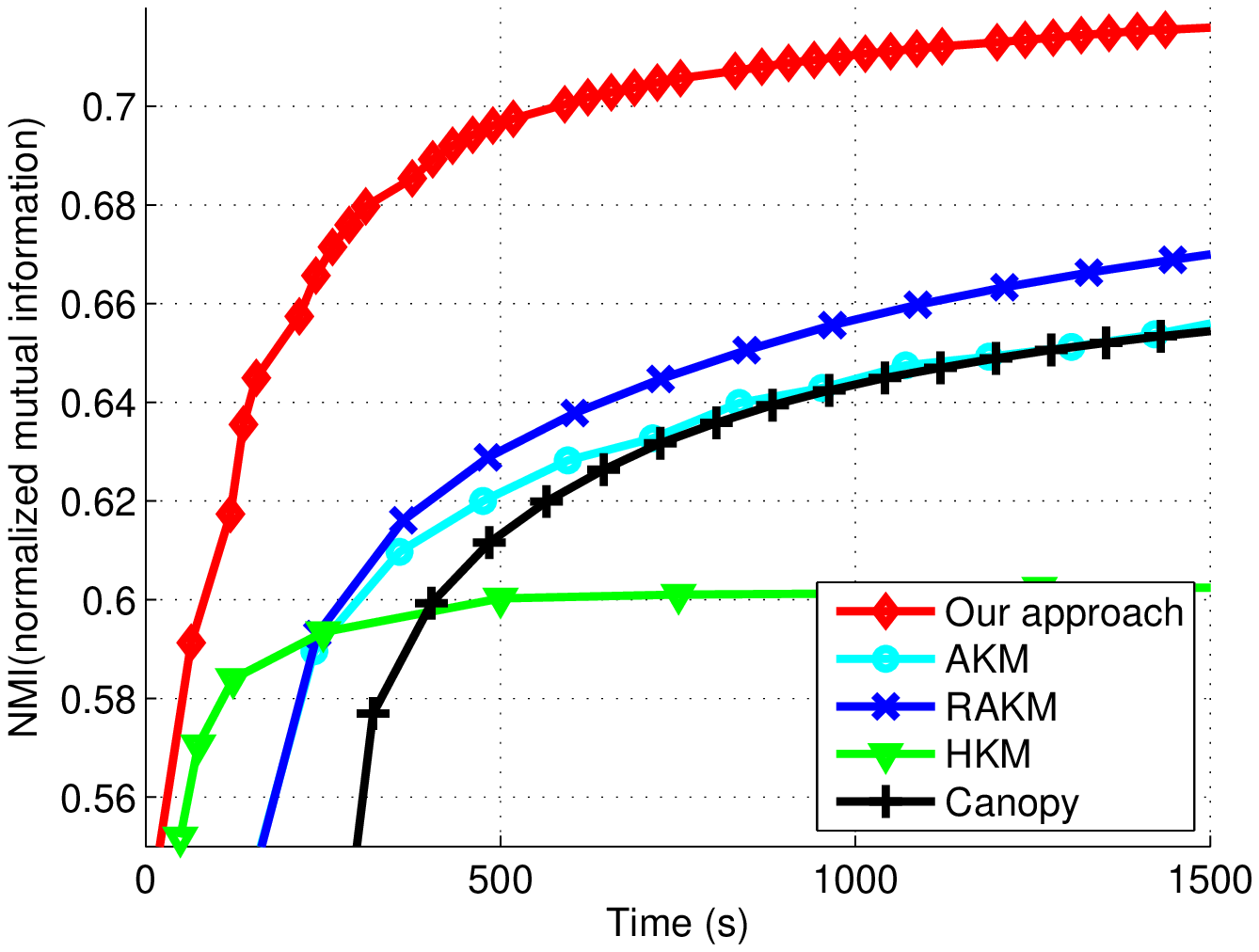}}
~~~~\subfigure[]{\includegraphics[width = \plotwidth\linewidth]{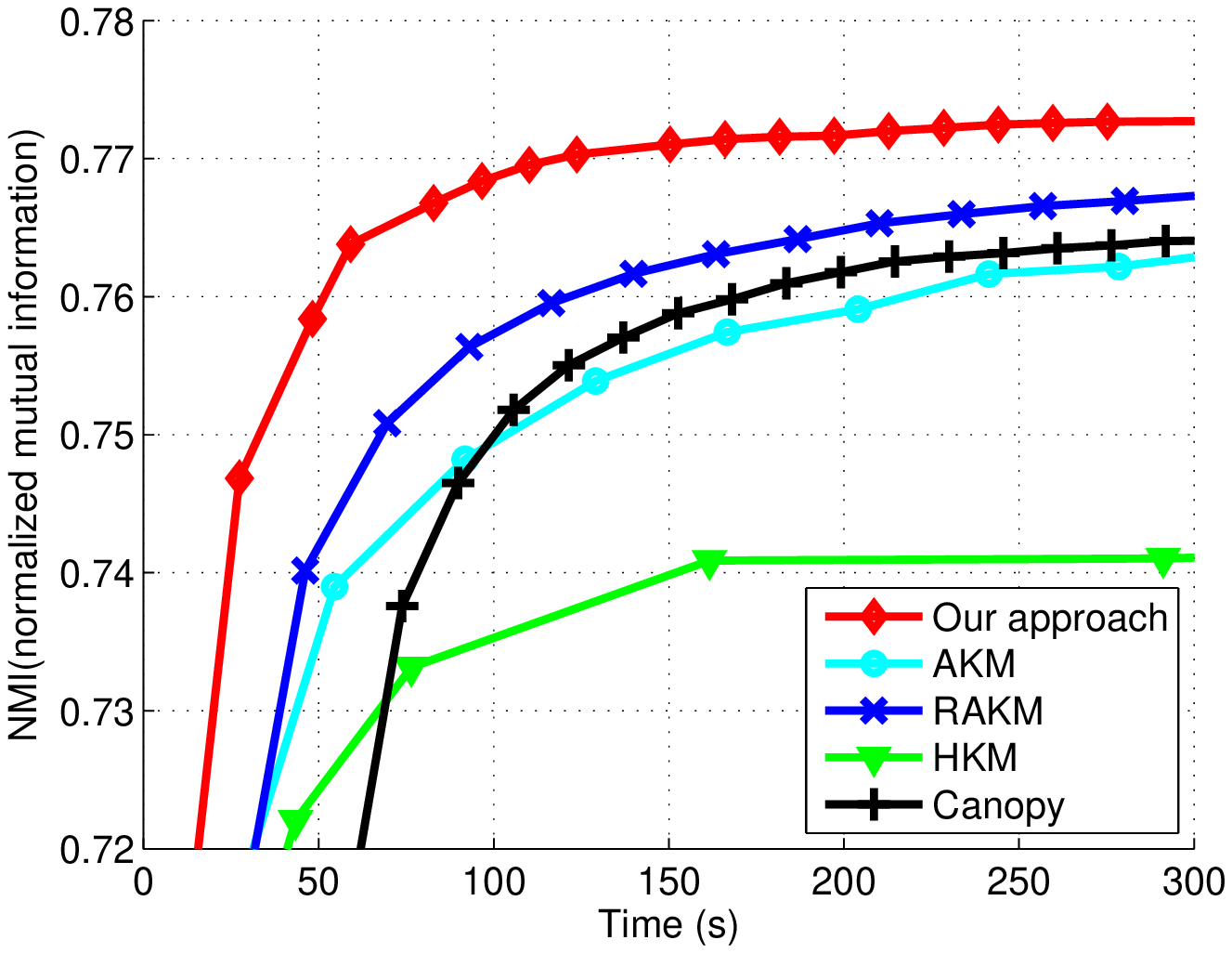}}
\caption{ Clustering performance in terms of normalized mutual information (NMI) vs. time, on the dataset of
(a) 200K tiny images, (b) 500K tiny images, and (c) 200K shopping images}
\label{fig:NMItime}
\end{figure*}

\section{Experiments}

\subsection{Data sets}
\label{subsec:datasets}

\mytextbf{SIFT.} The SIFT features are collected from the Caltech 101 data set~\cite{FeiFP04}.
We extract maximally stable extremal regions for each image,
and compute a $128$-dimensional SIFT feature for each region.
We randomly sample $1$ million features to form this data set.

\mytextbf{Tiny images.} We generate three data sets sampled from the tiny images~\cite{TorralbaFF08}:
$1M$ tiny images, $200K$ tiny images,
and $500K$ tiny images.
The $1M$ tiny images are randomly sampled
without using category (tag) information.
We sample $1K$ ($1.25K$) tags from the tiny images
and sample about $200$ ($400$) images for each tag,
forming $200K$ ($500K$) images.
We use a $384$-dimensional GIST feature to represent each image.

\mytextbf{Shopping images.} We collect about $5M$ shopping images from the Internet.
Each image is associated with a tag
to indicate its category.
We sample $1K$ tags and sample $200$ images for each tag
to form the $200K$ image set.
We use a $576$-dimensional HOG feature to represent each image.

\mytextbf{Oxford $5K$.} This data set~\cite{PhilbinCISZ07}
consists of $5062$ high resolution images of $11$ Oxford landmarks.
The collection has been manually annotated to
generate a comprehensive ground truth for $11$ different landmarks,
each represented by $5$ possible queries.
This gives a set of $55$ queries over which an object retrieval system can be evaluated.
The images,
the SIFT features,
and the ground truth labeling of this data set is publicly available\footnote{ \textup{http://www.robots.ox.ac.uk/\~{}vgg/data/oxbuildings/index.html}}.
This data set and the next data set will be used to
demonstrate the application of our approach to object retrieval.

\mytextbf{Ukbench $10K$.} This data set is from the Recognition Benchmark introduced in~\cite{NisterS06}.
It consists of $10200$ images split into
four-image groups, each of the same scene/object taken at different viewpoints.
The data set, the SIFT descriptors,
and the ground truth is publicly available\footnote{\textup{http://www.vis.uky.edu/\~{}stewe/ukbench/}}.

\subsection{Evaluation metric}

We use two metrics to evaluate the performance of various clustering algorithms,
the within-cluster sum of squared distortions (WCSSD) which is the objective value defined by Equation~\ref{eqn:WCSS},
and the normalized mutual information (NMI) which is widely used for clustering evaluation.
NMI requires the ground truth of cluster assignments $\mathcal{G}$ for points in the data set.
Given a clustering result $\mathcal{X}$, NMI is defined
by $\operatorname{NMI}(\mathcal{G}, \mathcal{X}) = \frac{I(\mathcal{G}, \mathcal{X})}{\sqrt{H(\mathcal{G})H(\mathcal{X})}}$,
where $I(\mathcal{G}, \mathcal{X})$ is the mutual information of $\mathcal{G}$ and $\mathcal{X}$
and $H(\cdot)$ is the entropy.

In object retrieval, image feature descriptors are quantized into visual words using codebooks.
A codebook of high quality will result in less quantization errors and more repeatable quantization results,
thus leading to a better retrieval performance.
We apply various clustering algorithms to constructing visual codebooks for object retrieval.
By fixing all the other components and parameters in our retrieval system except the codebook,
the retrieval performance is an indicator of the quality of the codebook.
For the Oxford $5K$ dataset, we follow~\cite{PhilbinCISZ07} to use mean average precision (mAP)
to evaluate the retrieved images.
For the ukbench $10K$ dataset, the retrieval performance is measured
by the average number of relevant images in the top $4$ retrieved images,
ranging from $0$ to $4$.

\subsection{Clustering performance comparison}

We compare our proposed clustering algorithm with four approximate $k$-means algorithms,
namely hierarchial $k$-means (HKM), approximate $k$-means (AKM), refined approximate $k$-means (RAKM)
and Canopy algorithm.
The exact Lloyd's is much less efficient
and prohibitively costly for large data sets,
so we do not report its results.
We use the implementation of HKM available from~\cite{MujaL09},
and the public release of AKM\footnote{\textup{http://www.robots.ox.ac.uk/\~{}vgg/software/fastcluster/}}.
The RAKM is modified from the above AKM release.
For Canopy algorithm,
we conduct principal component analysis over the features
to project them to a lower-dimensional subspace
to achieve a fast canopy construction.
For a fair comparison,
we initialize the cluster assignment
by a random partition tree in all algorithms except HKM
The time costs for constructing trees or other initialization
are all included in the comparisons.
All algorithms are run on a 2.66GHz desktop PC using a single thread.

Figure~\ref{fig:WCSStime} shows the clustering performance in terms of WCSSD vs. time.
The experiments are performed on two data sets,
the $1M$ $128$-dimensional SIFT data set and
the $1M$ $384$-dimensional tiny image data set, respectively.
The results are shown for different number of clusters, ranging from $500$ to $10K$.
Our approach consistently outperforms the other four approximate $k$-means algorithms -- it converges faster to a smaller objective value.

Figure~\ref{fig:NMItime} shows the clustering results in terms of NMI vs. time.
We use three labeled datasets,
the $200K$ tiny images, the $500K$ tiny images and the $200K$ shopping images.
Consistent with the WCSSD comparison results, our proposed algorithm is superior to the other four clustering algorithms.

We also show the qualitative clustering results
of our algorithm.
Figure~\ref{fig:ProductimageVisual} shows some examples
of the clustering results
over the $200K$ shopping images.
Figure~\ref{fig:TinyimageVisual} shows some examples
of the clustering results
over the $500K$ tiny images.
The first $3$ clusters are examples of similar objects,
the second $3$ clusters are examples of similar texture images,
and the last cluster are an example of similar sceneries.

\begin{figure}
\centering
\subfigure[]
{\label{rajeev1}
\includegraphics[width = 0.95\linewidth, clip]{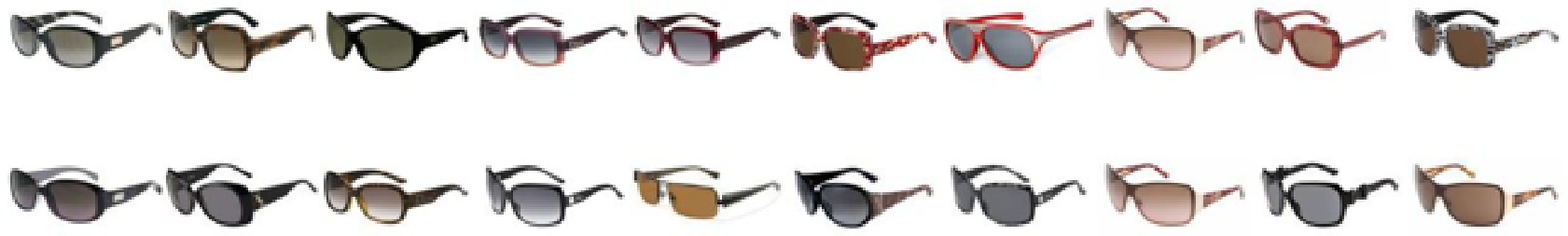}}\\

\subfigure[]
{\label{rajeev2}
\includegraphics[width = 0.95\linewidth, clip]{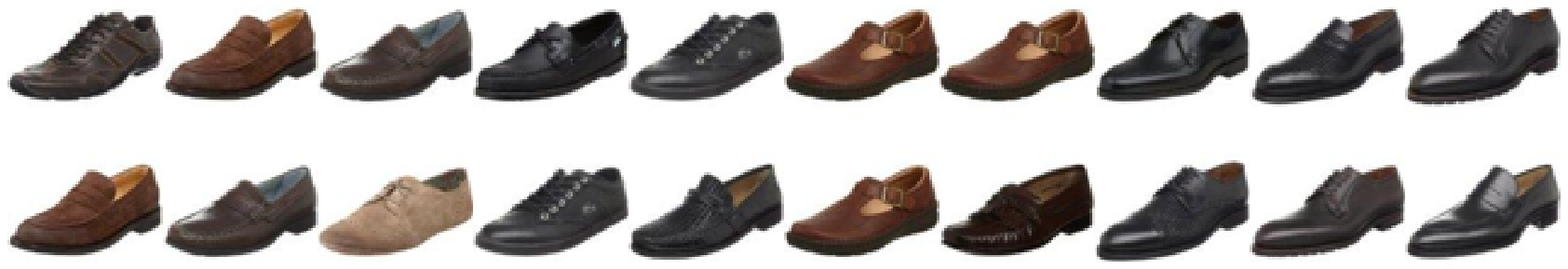}}\\

\subfigure[]
{\label{rajeev3}
\includegraphics[width = 0.95\linewidth, clip]{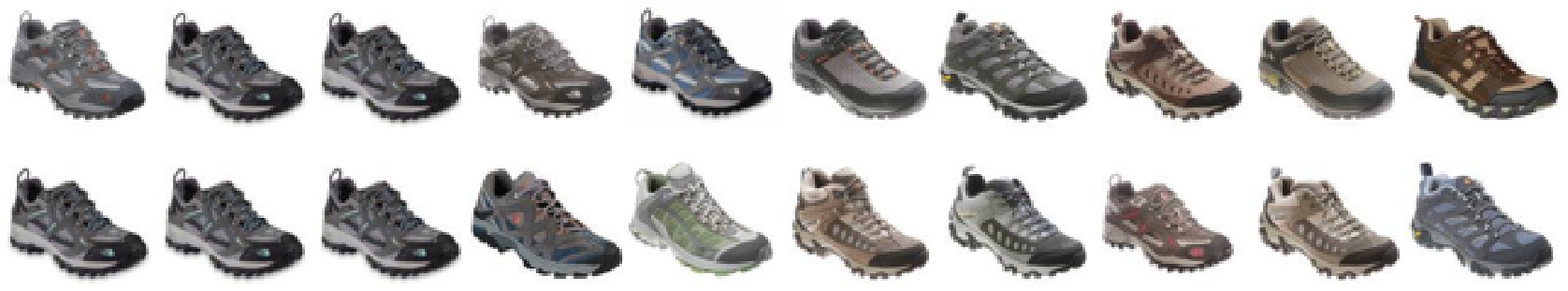}}\\

\subfigure[]
{\label{rajeev4}
\includegraphics[width = 0.95\linewidth, clip]{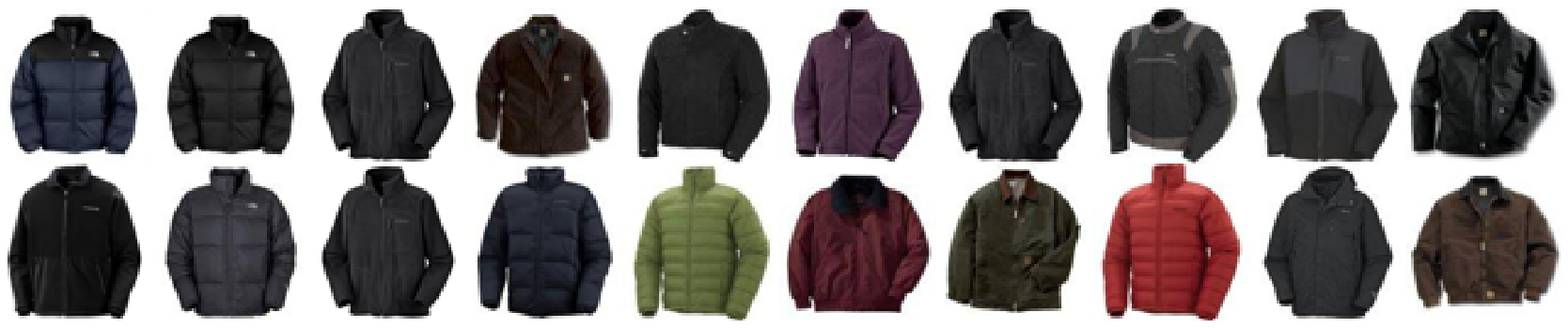}}\\

\subfigure[]
{\label{rajeev5}
\includegraphics[width = 0.95\linewidth, clip]{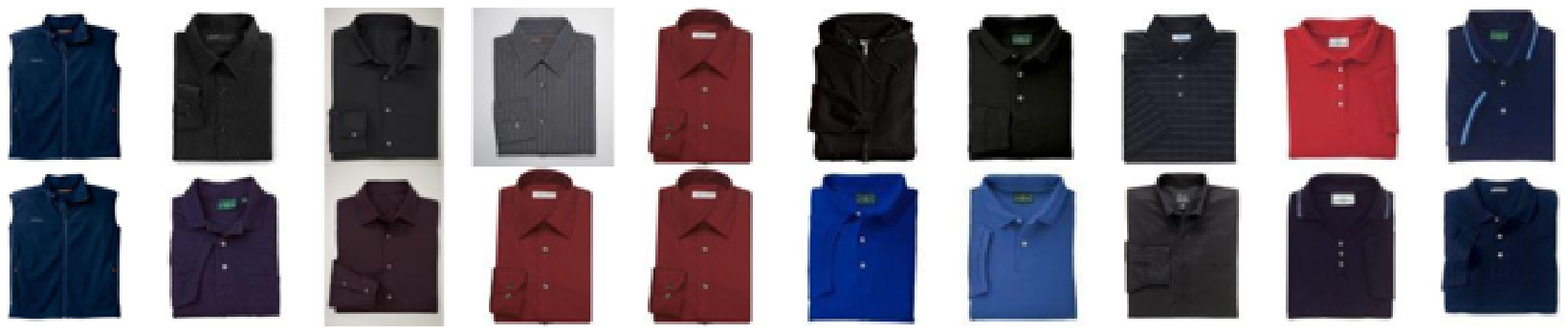}}\\

\subfigure[]
{\label{rajeev6}
\includegraphics[width = 0.95\linewidth, clip]{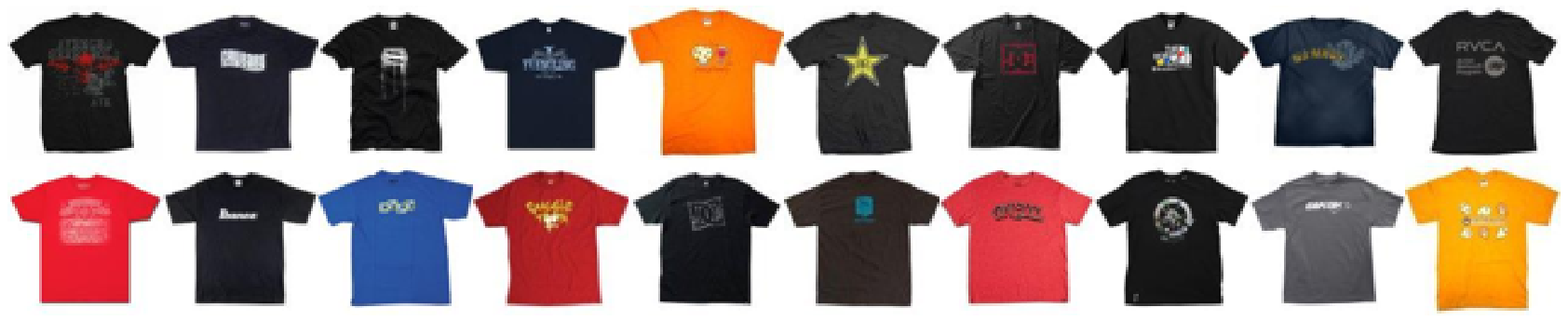}}\\
\caption{Clustering results over the $200K$ shopping images:
each cluster example is represented by two rows of images
which are randomly picked out from the cluster
}
\label{fig:ProductimageVisual}
\end{figure}

\begin{figure}
\centering
\subfigure[]
{\label{TinyVisual1}
\includegraphics[width = 0.95\linewidth, clip]{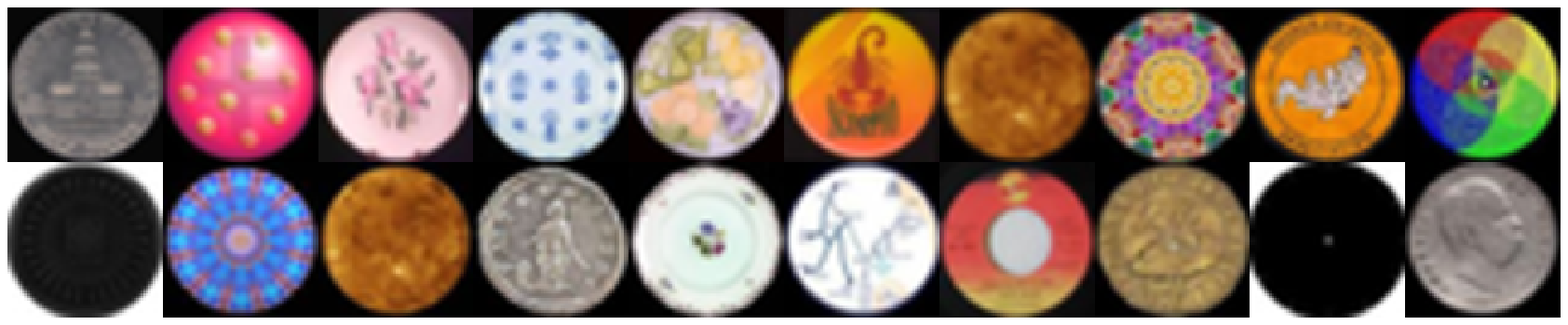}}\\

\subfigure[]
{\label{TinyVisual2}
\includegraphics[width = 0.95\linewidth, clip]{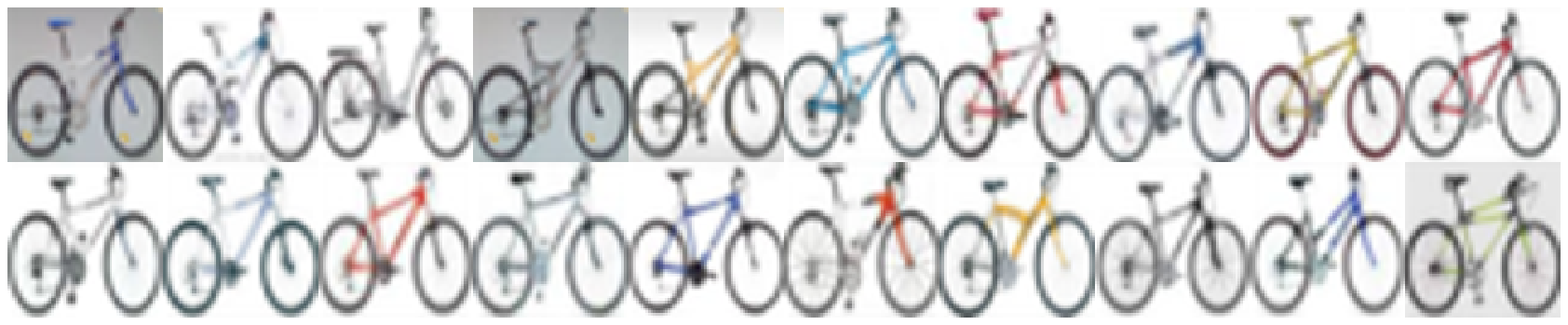}}\\

\subfigure[]
{\label{TinyVisual3}
\includegraphics[width = 0.95\linewidth, clip]{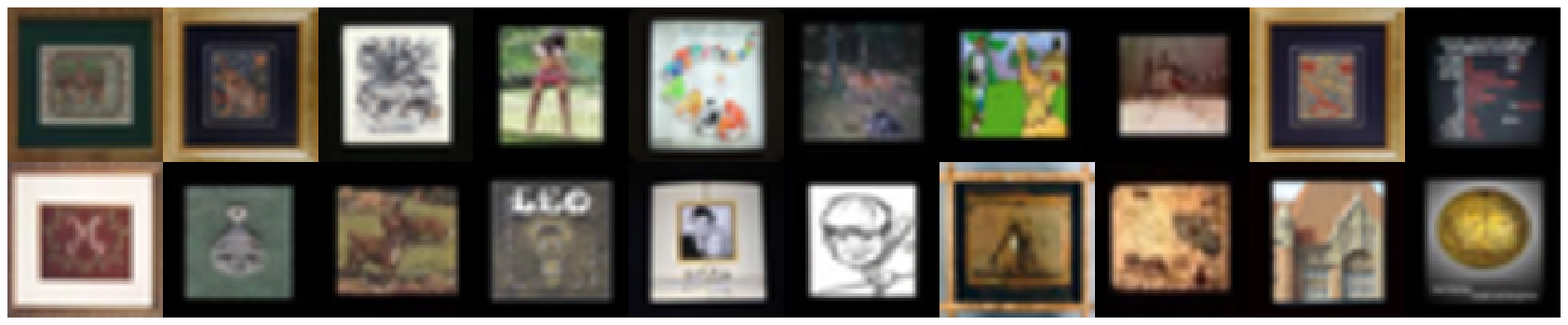}}\\

\subfigure[]
{\label{TinyVisual4}
\includegraphics[width = 0.95\linewidth, clip]{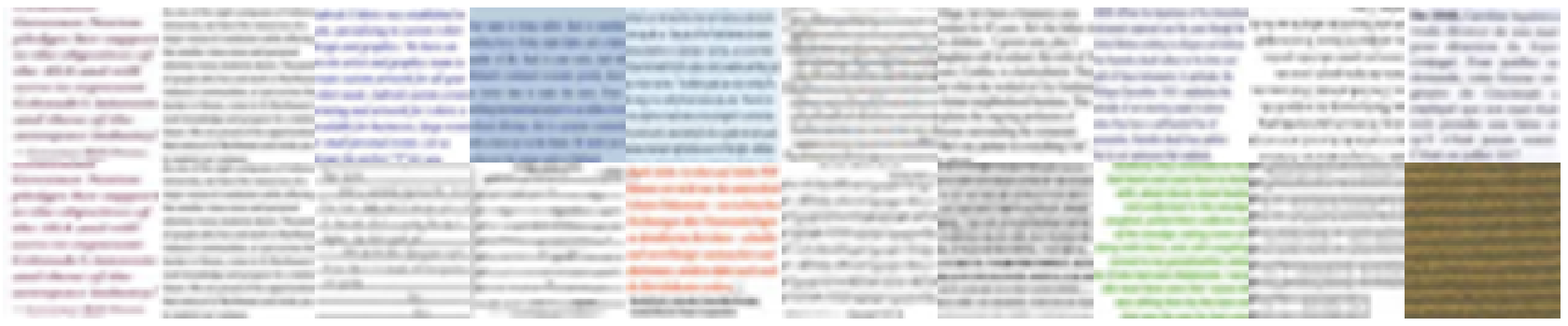}}\\

\subfigure[]
{\label{TinyVisual5}
\includegraphics[width = 0.95\linewidth, clip]{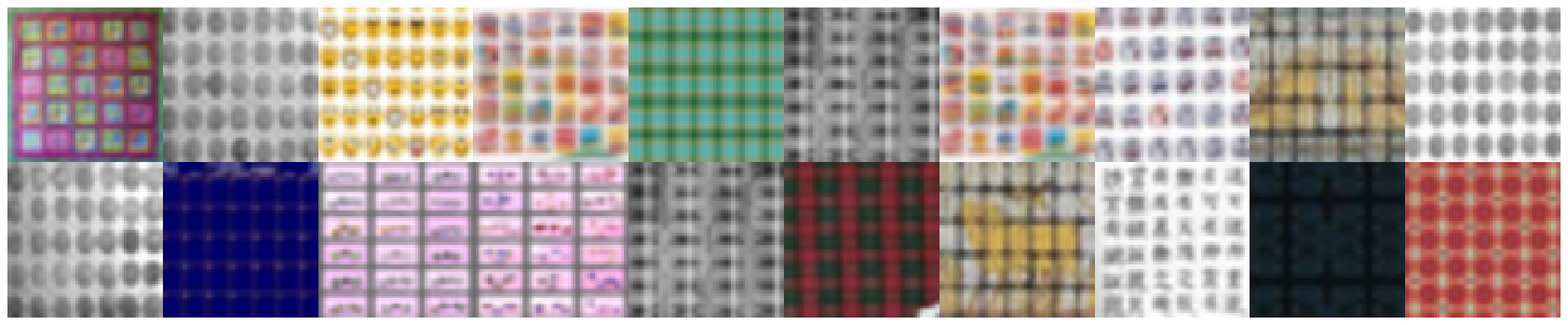}}\\


\subfigure[]
{\label{TinyVisual7}
\includegraphics[width = 0.95\linewidth, clip]{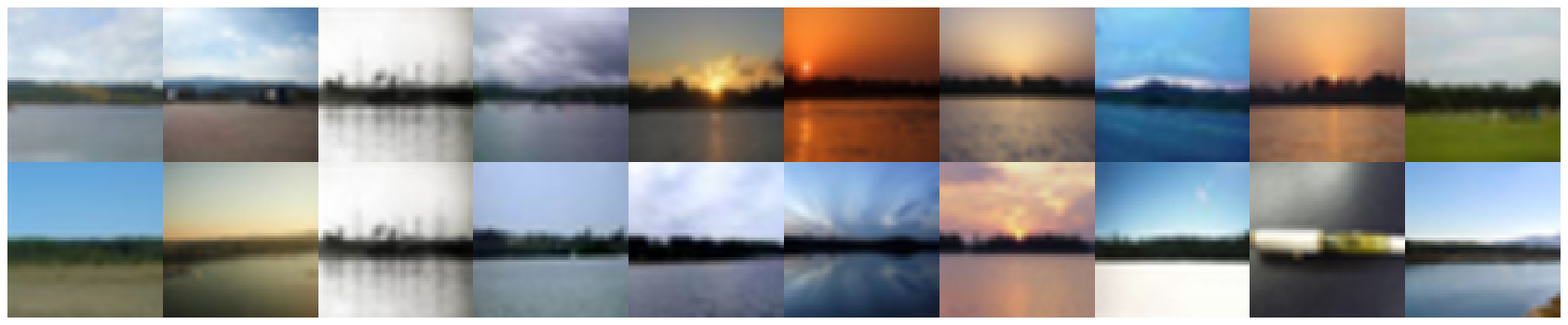}}\\
\caption{Clustering results over the $500K$ tiny images:
each cluster example is represented by two rows of images
which are randomly picked out from the cluster.
}
\label{fig:TinyimageVisual}
\end{figure}

\subsection{Empirical analysis}

We conduct empirical studies to understand why our proposed algorithm has superior performance.
In particular we compare our proposed approach
with AKM~\cite{PhilbinCISZ07} and RAKM~\cite{Philbin10a}
in terms of the accuracy and the time cost of cluster assignment,
using the task of clustering the $1M$ Tiny image data set into $2000$ clusters.
To be on the same ground, in the assignment step the number of candidate clusters
for each point is set the same.
For (R)AKM, the number of candidate clusters is simply the number of points accessed in $k$-d trees
when searching for a nearest neighbor.
For our proposed algorithm, we partition the data points
with RP trees such that the average number of candidate clusters is the same as the number of accessed points in $k$-d trees.
Figure~\ref{fig:studyofourapproachwithAKM:accuray} compares the accuracy of cluster assignment
by varying the number of candidate clusters. We can see that our approach has a much higher accuracy in all cases,
which has a positive impact on the iterative clustering algorithm to make it converge faster.
Figure~\ref{fig:studyofourapproachwithAKM:time} compares the time of performing one iteration, by varying
the number of candidate clusters $M$ for each point.
We can see that our algorithm is much faster than (R)AKM in all cases,
e.g., taking only about half the time of (R)AKM when $M=50$.
This is as expected since finding the best cluster costs $O(1)$
for our algorithm but $O(\log k)$ for $k$d-trees used in (R)AKM.

\begin{figure}
\centering
\subfigure[]{\label{fig:studyofourapproachwithAKM:accuray}\includegraphics[width=0.48\linewidth]{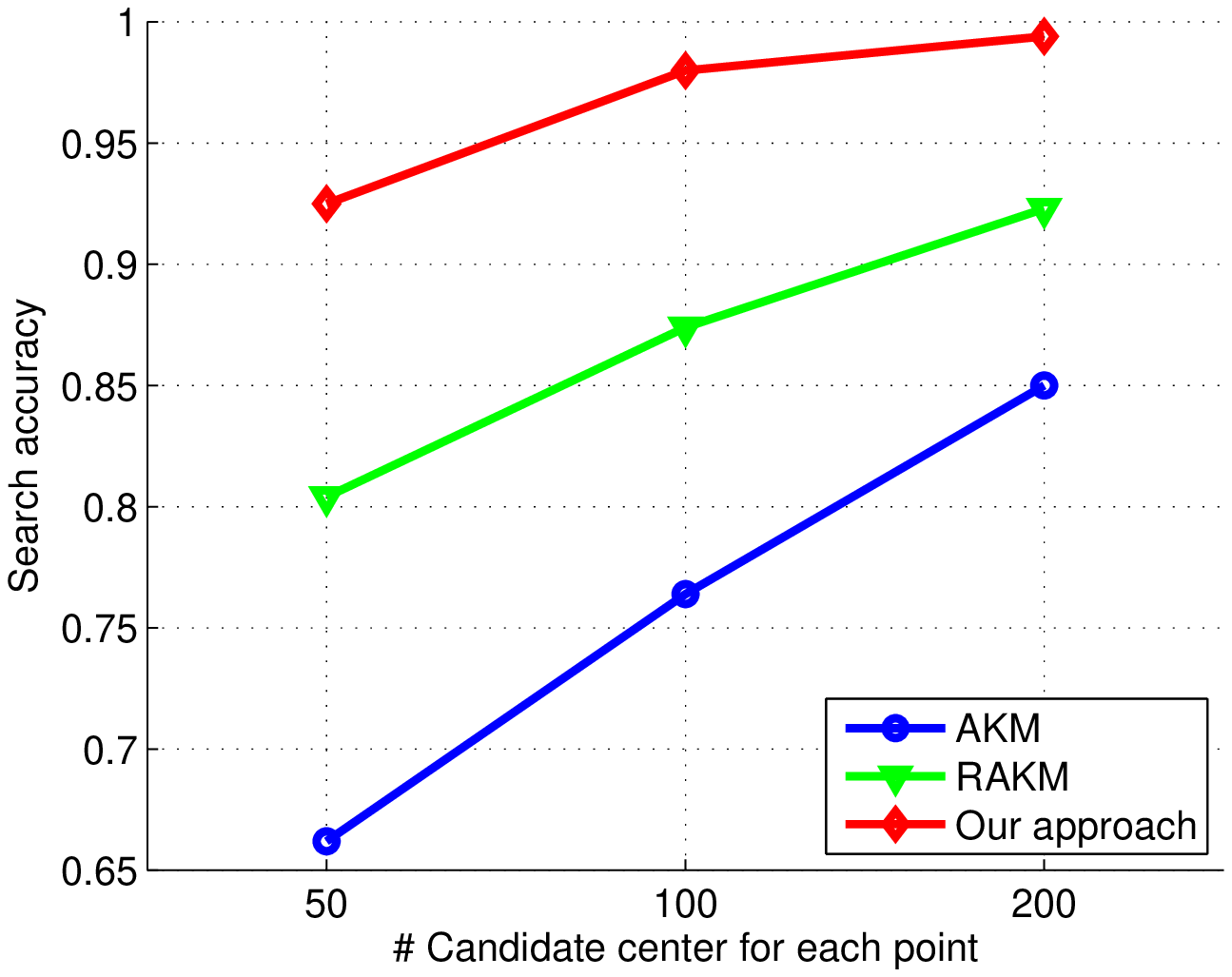}}
~~~\subfigure[]{\label{fig:studyofourapproachwithAKM:time}\includegraphics[width=0.48\linewidth]{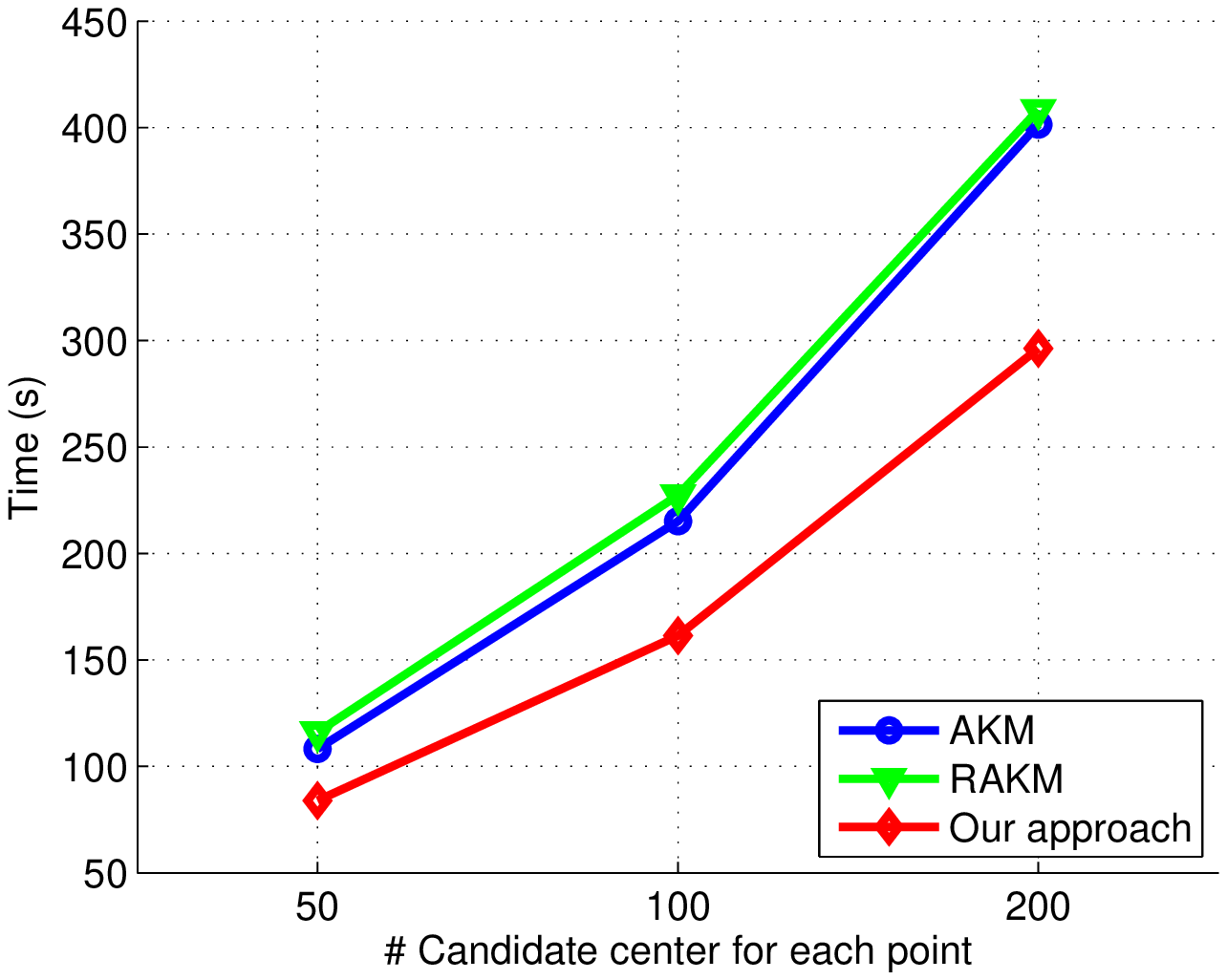}}
\caption{ Comparison of accuracy and time in the assignment step when clustering the $1M$ Tiny image data set into $2000$ clusters.
(a) accuracy vs. the number of cluster candidates; (b) time for one iteration vs. the number of cluster candidates}
\label{fig:studyofourapproachwithAKM}
\end{figure}

We perform another empirical study to investigate the bucket size parameter
in the RP tree, using the task of clustering the SIFT dataset into $10K$ clusters.
Figure~\ref{fig:bucketsize:iteration} shows
the results in terms of WCSSD vs. the number of iterations, with bucket sizes set
to $5$, $10$, $20$, $40$, respectively.
A larger bucket size leads to a larger WCSSD reduction in each iteration,
because it effectively increases the neighborhood size for each data point.
Figure~\ref{fig:bucketsize:time} shows
the result in terms of WCSSD vs. time. We observe that at the beginning,
bucket sizes of $10$ and $20$ perform even better than the bucket size of $40$.
But eventually, the performance of various bucket sizes are similar.
The difference between Figure~\ref{fig:bucketsize:iteration}
and Figure~\ref{fig:bucketsize:time}
is expected, as a larger bucket size leads to a better cluster assignment at each iteration,
but increases the time cost for one iteration.
In our comparison experiments,
a bucket size of $10$ is adopted.

\subsection{Evaluation using object retrieval}

We compare the quality of codebooks built by HKM, (R)AKM, and our approach,
using the performance of object retrieval. AKM and RAKM perform almost
the same when the number of accessed candidate centers is large enough,
so we only present results from AKM.

We perform the experiments on the UKBench $10K$ dataset which has $7M$ local features,
and on the Oxford $5K$ dataset which has $16M$ local features.
Following~\cite{PhilbinCISZ07},
we perform the clustering algorithms to build the codebooks, and test only the filtering stage of the retrieval system,
i.e., retrieval is performed using the inverted file (including the tf-idf weighting).

The results over the UKbench $10K$ dataset are obtained
by constructing $1M$ codebook, and use the $L_1$ distance metric.
The results of HKM and AKM
are taken from~\cite{NisterS06}
and~\cite{PhilbinCISZ07}, respectively.
From Table~\ref{tab:ukbench:tab},
we see that for the same codebook size,
our method outperforms other approaches.
Besides, we also conduct the experiment
over subsets of various sizes, which means that
we only consider the images in the subset as queries
and the search range is also constrained within the subset.
The performance comparison is given
in Figure~\ref{fig:ukbench:fig},
from which we can see our approach consistently
gets superior performances.

\begin{figure}[t]
\centering
\subfigure[]{\label{fig:bucketsize:iteration}\includegraphics[width = .45\linewidth]{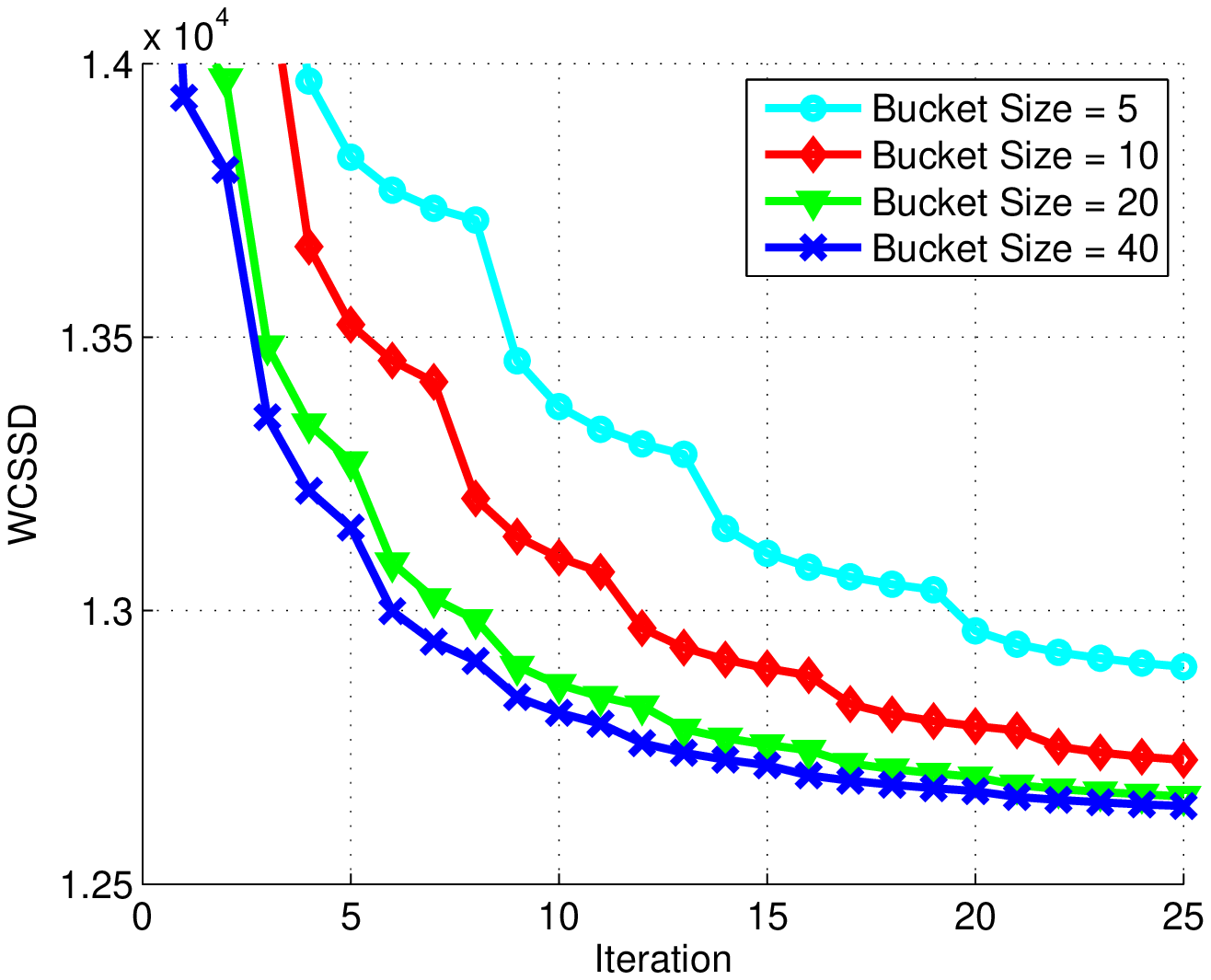}}
~~~~~\subfigure[]{\label{fig:bucketsize:time}\includegraphics[width = .45\linewidth]{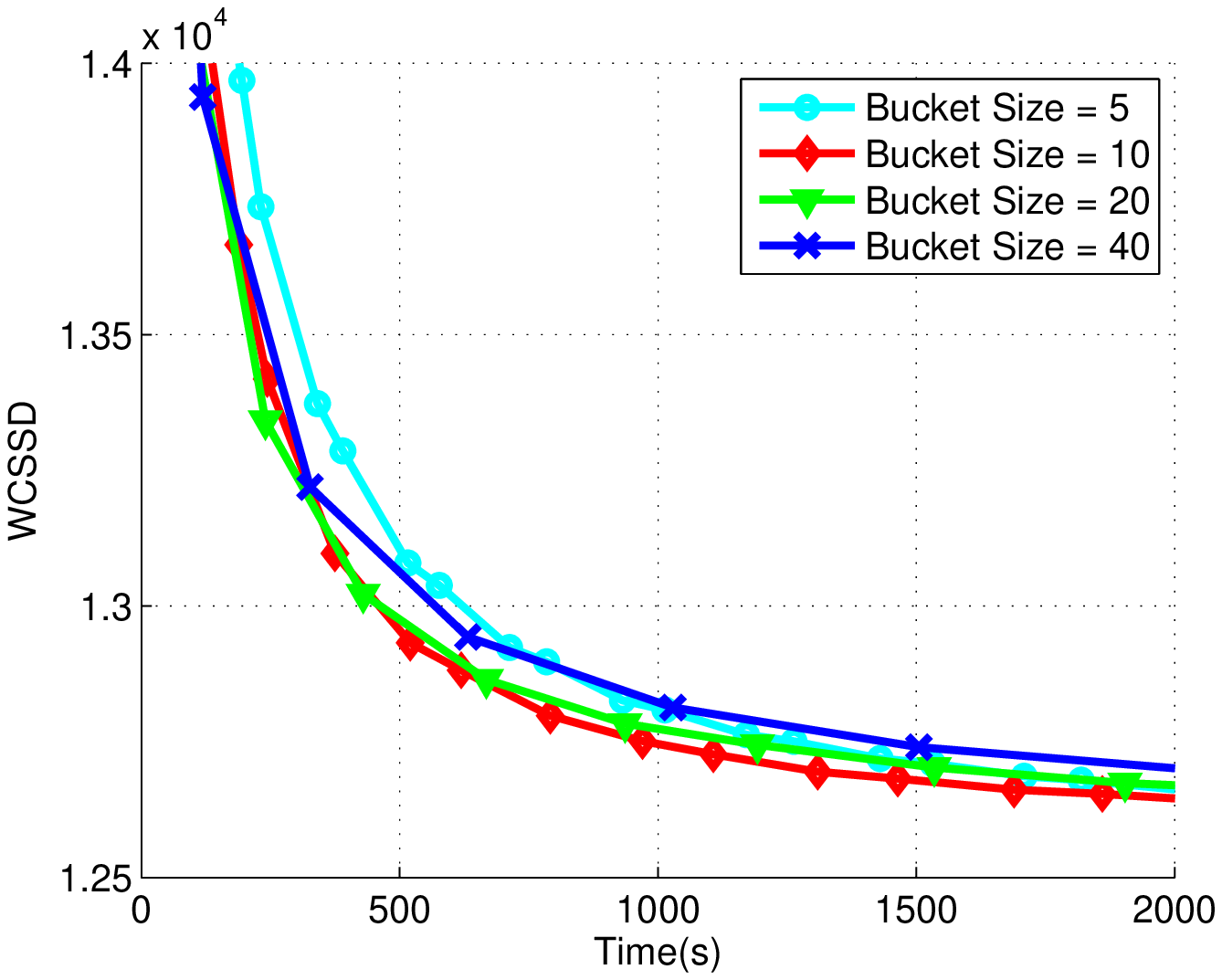}}
\caption{Clustering performance vs. the bucket size of a RP tree
}
\label{fig:bucketsize}
\end{figure}

\begin{table}
\sidecaption[t]
\begin{tabular}[b]{|@{~~}c@{~~}|@{~~}c@{~~}|@{~~}c@{~~}|}
\hline
Method & Scoring levels & Average Top \\
\hline
HKM & $1$ & $3.16$\\
HKM & $2$ & $3.07$\\
HKM & $3$ & $3.29$\\
HKM & $4$ & $3.29$\\
AKM & & $3.45$\\
Ours & & $\mathbf{3.50}$\\ \hline
\end{tabular}
\caption{A comparison of our approach
to HKM and AKM
on the UKbench $10K$ data set
using a $1M$-word codebook}
\label{tab:ukbench:tab}
\end{table}

\begin{figure}[t]
\sidecaption[t]
\includegraphics[width = .45\linewidth, clip]{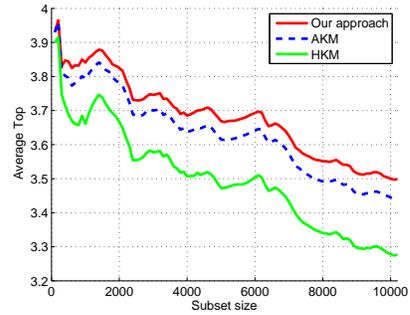}
\caption{A comparison of our approach
to HKM and AKM
on the UKbench $10K$ data set
with various subset sizes}
\label{fig:ukbench:fig}
\end{figure}

The performance comparison using the Oxford $5K$ dataset
is shown in Table~\ref{tab:oxford5k1}.
We show the results of using the bag-of-words (BoW) representation with a $1M$ codebook
and using spatial re-ranking~\cite{PhilbinCISZ07}.
Our approach achieves the best performance,
outperforming AKM in both the BoW representation and spatial re-ranking.
We also compare the performance
of our approach to AKM and HKM
using different codebook sizes, as shown
in Table~\ref{tab:oxford5k2}.
Our approach is superior compared to other approaches
with different codebook sizes.
Different from AKM that gets the best performance
with a $1M$-word codebook,
our approach obtains the best performance with a $750K$-word codebook,
indicating that our approach is producing a higher quality codebook.

Last,
we show
some visual examples
of the retrieval results
in Figure~\ref{fig:OxfordVisual}.
The first images in each row is the query,
followed by the top results.

\begin{figure*}
\begin{minipage}{1\linewidth}
\centering
\subfigure[]
{\label{oxfordvisual1}
\includegraphics[width = 0.95\linewidth, clip]{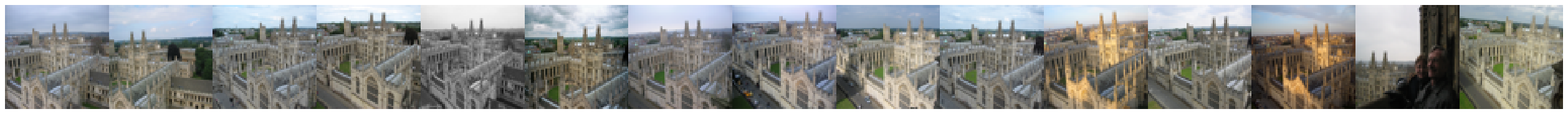}}\\

\subfigure[]
{\label{oxfordvisual2}
\includegraphics[width = 0.95\linewidth, clip]{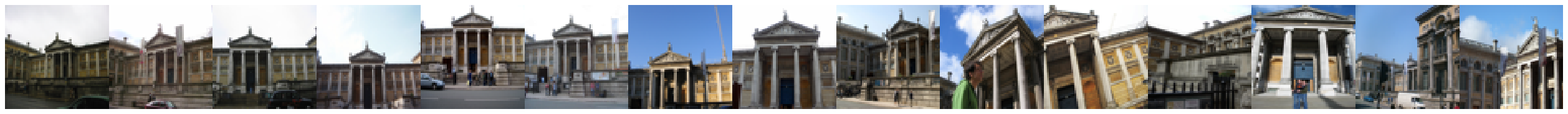}}\\

\subfigure[]
{\label{oxfordvisual3}
\includegraphics[width = 0.95\linewidth, clip]{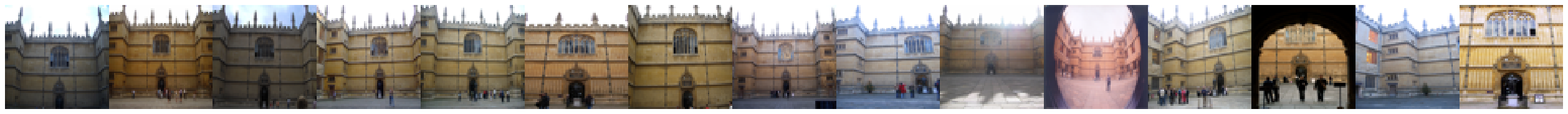}}\\

\subfigure[]
{\label{oxfordvisual4}
\includegraphics[width = 0.95\linewidth, clip]{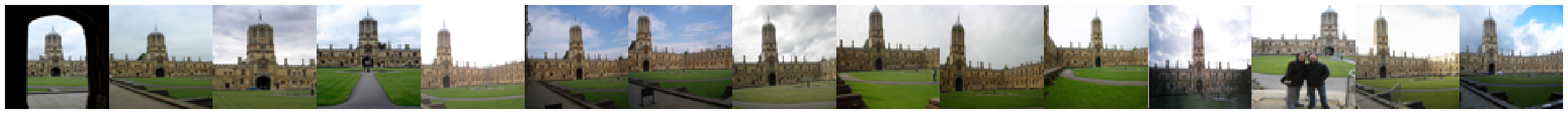}}\\

\subfigure[]
{\label{oxfordvisual5}
\includegraphics[width = 0.95\linewidth, clip]{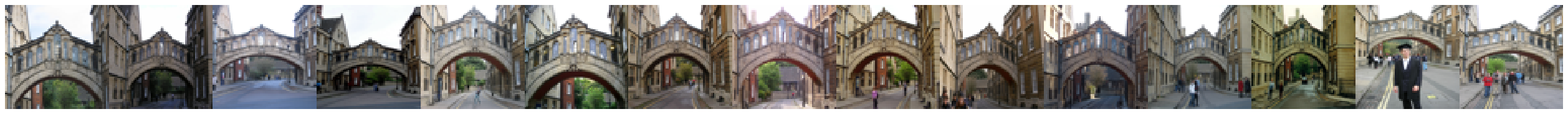}}\\

\subfigure[]
{\label{oxfordvisual6}
\includegraphics[width = 0.95\linewidth, clip]{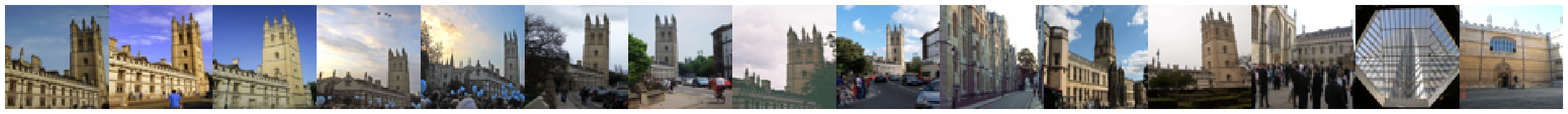}}\\

\subfigure[]
{\label{oxfordvisual7}
\includegraphics[width = 0.95\linewidth, clip]{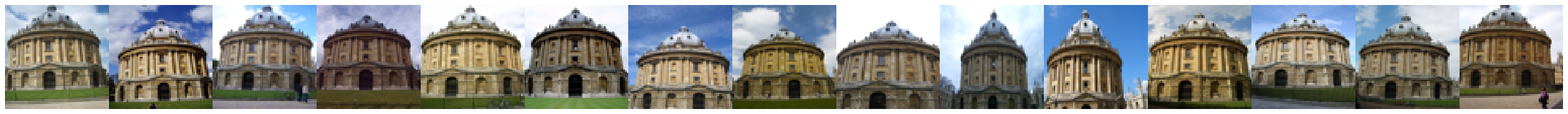}}\\
\caption{Examples of the retrieval results
of Oxford5k dataset: the first image in each row
is the query image
and the following images are the top results
}
\label{fig:OxfordVisual}
\end{minipage}
\end{figure*}

\section{Conclusions}
There are three factors
that contribute to the superior performance of our proposed approach:
(1) We only need to consider active points that change their cluster assignments
in the assignment step of the $k$-means algorithm;
(2) Most active points locate at or near cluster boundaries;
(3) We can efficiently identify active points
by pre-assembling data points using multiple random partition trees.
The result is a simple, easily parallelizable,
and surprisingly efficient $k$-means clustering algorithm.
It outperforms state-of-the-art on clustering large-scale real datasets and learning codebooks for image retrieval.

\begin{table}[b]
\centering
\caption{A comparison of our approach
with HKM and AKM
on the Oxford $5K$ data set
with a $1M$-word codebook}
\label{tab:oxford5k1}
\begin{tabular}[b]{|@{~~}c@{~~}|@{~~}c@{~~}|@{~~}c@{~~}|@{~~}c@{~~}|}
\hline
Method & Scoring level& mAP (BoW) & mAP (Spatial)\\
\hline
HKM-1 & $1$ & $0.439$ & $0.469$\\
HKM-2 & $2$ & $0.418$ & \\
HKM-3 & $3$ & $0.372$ & \\
HKM-4 & $4$ & $0.353$ & \\
AKM & & $0.618$ & 0.647\\
Our approach & & $\mathbf{0.655}$ & $\mathbf{0.666}$\\
\hline
\end{tabular}
\end{table}

\begin{table}[t]
\centering
\caption{Performance comparison of our approach,
HKM, and AKM
using different codebook sizes
on the Oxford $5K$ data set}
\label{tab:oxford5k2}
\begin{tabular}{|@{~~}c@{~~}|@{~~}c@{~~}|@{~~}c@{~~}|@{~~}c@{~~}|@{~~}c@{~~}|@{~~}c@{~~}|}
\hline
Vocabulary size & HKM & AKM & AKM spatial & Ours & Ours spatial \\
\hline
$250K$ & $0.399$ & $0.598$ & $0.633$ & $0.620$ & $0.636$\\
$500K$ & $0.422$ & $0.606$ & $0.642$ & $0.647$ & $0.658$\\
$750K$ & $0.440$ & $0.609$ & $0.630$ & $\mathbf{0.664}$ & $\mathbf{0.674}$\\
$1M$ & $0.439$ & $0.618$ & $0.645$ & $0.655$ & $0.666$\\
$1.25M$ & $0.449$ & $0.602$ & $0.625$ & $0.650$ & $0.674$\\
$2M$ & $0.457$ & $0.604$ & $0.617$ & $0.621$ & $0.647$\\
\hline
\end{tabular}
\end{table}

{
\small
\bibliographystyle{./styles/spmpsci}
\bibliography{clustering}

\begin{thebibliography}{10}
\providecommand{\url}[1]{{#1}}
\providecommand{\urlprefix}{URL }
\expandafter\ifx\csname urlstyle\endcsname\relax
  \providecommand{\doi}[1]{DOI~\discretionary{}{}{}#1}\else
  \providecommand{\doi}{DOI~\discretionary{}{}{}\begingroup
  \urlstyle{rm}\Url}\fi

\bibitem{BoutsidisZ10}
Boutsidis, C., Zouzias, A., Drineas, P.: Random projections for $k$-means
  clustering.
\newblock CoRR \textbf{abs/1011.4632} (2010)

\bibitem{ChumM10}
Chum, O., Matas, J.: Large-scale discovery of spatially related images.
\newblock IEEE PAMI \textbf{32}(2), 371--377 (2010)

\bibitem{Elkan03}
Elkan, C.: Using the triangle inequality to accelerate k-means.
\newblock In: ICML (2003)

\bibitem{FeiFP04}
Fei-Fei, L., Fergus, R., Perona, P.: Learning generative visual models from few
  training examples: an incremental bayesian approach tested on 101 object
  categories.
\newblock In: CVPR Workshop on Generative-Model Based Vision (2004)

\bibitem{FernB03a}
Fern, X.Z., Brodley, C.E.: Random projection for high dimensional data
  clustering: A cluster ensemble approach.
\newblock In: ICML, pp. 186--193 (2003)

\bibitem{Forgy65}
Forgy, E.W.: Cluster analysis of multivariate data: Efficiency versus
  interpretability of classifications.
\newblock Biometrics \textbf{21}, 768--780 (1965)

\bibitem{FrahlingS08}
Frahling, G., Sohler, C.: A fast k-means implementation using coresets.
\newblock Int. J. Comput. Geometry Appl. \textbf{18}(6), 605--625 (2008)

\bibitem{JegouDS11}
J{\'e}gou, H., Douze, M., Schmid, C.: Product quantization for nearest neighbor
  search.
\newblock IEEE PAMI \textbf{33}(1), 117--128 (2011)

\bibitem{KanungoMNPSW02}
Kanungo, T., Mount, D.M., Netanyahu, N.S., Piatko, C.D., Silverman, R., Wu,
  A.Y.: An efficient k-means clustering algorithm: Analysis and implementation.
\newblock IEEE PAMI \textbf{24}(7), 881--892 (2002)

\bibitem{Lanczos50}
Lanczos, C.: An iteration method for the solution of the eigenvalue problem of
  linear differential and integral operators.
\newblock Journal of Research of the National Bureau of Standards
  \textbf{45}(4), 255--282 (1950)

\bibitem{LiWZLF08}
Li, X., Wu, C., Zach, C., Lazebnik, S., Frahm, J.M.: Modeling and recognition
  of landmark image collections using iconic scene graphs.
\newblock In: ECCV (2008)

\bibitem{Lloyd82}
Lloyd, S.P.: Least squares quantization in {PCM}.
\newblock IEEE Transactions on Information Theory \textbf{28}(2), 129--137
  (1982)

\bibitem{MacQueen67}
MacQueen, J.B.: Some methods for classification and analysis of multivariate
  observations.
\newblock In: Proc. of 5th Berkeley Symposium on Mathematical Statistics and
  Probability, vol.~1, pp. 281--297 (1967)

\bibitem{MahajanNV09}
Mahajan, M., Nimbhorkar, P., Varadarajan, K.R.: The planar k-means problem is
  np-hard.
\newblock In: WALCOM, pp. 274--285 (2009)

\bibitem{McCallumNU00}
McCallum, A., Nigam, K., Ungar, L.H.: Efficient clustering of high-dimensional
  data sets with application to reference matching.
\newblock In: KDD (2000)

\bibitem{MujaL09}
Muja, M., Lowe, D.G.: Fast approximate nearest neighbors with automatic
  algorithm configuration.
\newblock In: VISSAPP (1), pp. 331--340 (2009)

\bibitem{NisterS06}
Nist{\'e}r, D., Stew{\'e}nius, H.: Scalable recognition with a vocabulary tree.
\newblock In: CVPR (2006)

\bibitem{Philbin10a}
Philbin, J.: Scalable object retrieval in very large image collections.
\newblock Ph.D. thesis, University of Oxford (2010)

\bibitem{PhilbinCISZ07}
Philbin, J., Chum, O., Isard, M., Sivic, J., Zisserman, A.: Object retrieval
  with large vocabularies and fast spatial matching.
\newblock In: CVPR (2007)

\bibitem{PhilbinZ08}
Philbin, J., Zisserman, A.: Object mining using a matching graph on very large
  image collections.
\newblock In: ICVGIP, pp. 738--745 (2008)

\bibitem{RaguramWFL11}
Raguram, R., Wu, C., Frahm, J.M., Lazebnik, S.: Modeling and recognition of
  landmark image collections using iconic scene graphs.
\newblock IJCV \textbf{95}(3), 213--239 (2011)

\bibitem{Sculley10}
Sculley, D.: Web-scale k-means clustering.
\newblock In: WWW (2010)

\bibitem{SimonSS07}
Simon, I., Snavely, N., Seitz, S.M.: Scene summarization for online image
  collections.
\newblock In: ICCV, pp. 1--8 (2007)

\bibitem{SivicZ03}
Sivic, J., Zisserman, A.: Video google: A text retrieval approach to object
  matching in videos.
\newblock In: ICCV, pp. 1470--1477 (2003)

\bibitem{TorralbaFF08}
Torralba, A.B., Fergus, R., Freeman, W.T.: 80 million tiny images: A large data
  set for nonparametric object and scene recognition.
\newblock IEEE PAMI \textbf{30}(11), 1958--1970 (2008)

\bibitem{VermaKD09}
Verma, N., Kpotufe, S., Dasgupta, S.: Which spatial partition trees are
  adaptive to intrinsic dimension?
\newblock In: Proc. 25th Conf. on Uncertainty in Artificial Intelligence (2009)

\bibitem{WangJH11}
Wang, J., Jia, L., Hua, X.S.: Interactive browsing via diversified visual
  summarization for image search results.
\newblock Multimedia Syst. \textbf{17}(5), 379--391 (2011)

\bibitem{WangWKZL12}
Wang, J., Wang, J., Ke, Q., Zeng, G., Li, S.: Fast approximate k-means via
  cluster closures.
\newblock In: CVPR, pp. 3037--3044 (2012)

\bibitem{WangWZTGL12}
Wang, J., Wang, J., Zeng, G., Tu, Z., Gan, R., Li, S.: Scalable k-nn graph
  construction for visual descriptors.
\newblock In: CVPR, pp. 1106--1113 (2012)

\bibitem{WangWZTGL13}
Wang, J., Wang, J., Zeng, G., Tu, Z., Gan, R., Li, S.: Scalable \$k\$-nn graph
  construction.
\newblock CoRR \textbf{abs/1307.7852} (2013)

\end{thebibliography}
}
\end{document}